\theoremstyle{plain}
\newtheorem{theorem}{Theorem}
\newtheorem{proposition}{Proposition}
\newtheorem{remark}{Remark}
\newtheorem{note}{Note}
\theoremstyle{definition}
\newtheorem{definition}{Definition}
\newtheorem{assumption}{Assumption}
\newcommand{\indicator}[1]{\mathbb{I}\left[#1\right]}
\newcommand{\realset}{\mathbb{R}}
\newcommand{\trainingset}{D}
\newcommand{\outputspace}{{\cal Y}}
\newcommand{\inputspace}{{\cal X}}
\newcommand{\versionspace}{{\cal W}}
\newcommand{\sign}{\text{sign}}
\newcommand{\inter}{\cap}
\newcommand{\ball}{{\cal B}}
\newcommand{\svm}{\text{\sc Svm}\xspace}
\newcommand{\convex}{{\cal C}}
\newcommand{\margin}{\gamma}
\newcommand{\dotprod}[2]{\langle #1, #2 \rangle}
\newcommand{\norm}[1]{\Vert #1 \Vert_2}
\newcommand{\vol}{\text{Vol}}
\newcommand{\cg}{\textbf{cg}}
\newcommand{\cc}{\textbf{cc}}
\newcommand{\algo}{{\cal A}}
\newcommand{\union}{\bigcup}
\newcommand{\compression}{{\cal S}}
\newcommand{\Dset}{{\cal D}}
\DeclareMathOperator*{\find}{\text{find }}
\newcommand{\change}[2]{\textcolor{red}{\---old : #1\---}\textcolor{cyan}{#2}}
\newcommand{\changeok}[2]{#2}
\newcommand{\activebpm}{{\tt Active-BPM}\xspace}
\newcommand{\activesvm}{{\tt Active-SVM}\xspace}
\newenvironment{equationsize}[1]{%
  \skip@=\baselineskip 
  #1%
  \baselineskip=\skip@ 
  \equation
}{\endequation \ignorespacesafterend} 
\newenvironment{equationsize*}[1]{%
  \skip@=\baselineskip 
  #1%
  \baselineskip=\skip@ 
  \equation
}{\nonumber\endequation \ignorespacesafterend} 
\newenvironment{alignsize*}[1]{%
  \skip@=\baselineskip 
  #1%
  \baselineskip=\skip@ 
  \start@align\@ne\st@rredtrue\m@ne
}{\endalign\ignorespacesafterend} 
\newcommand{\captionfonts}{\small}
\long\def\@makecaption#1#2{%
  \vskip\abovecaptionskip
  \sbox\@tempboxa{{\captionfonts #1: #2}}%
  \ifdim \wd\@tempboxa >\hsize
    {\captionfonts #1: #2\par}
  \else
    \hbox to\hsize{\hfil\box\@tempboxa\hfil}%
  \fi
  \vskip\belowcaptionskip}
\begin{document}

\title{From Cutting Planes Algorithms to Compression Schemes and Active
Learning}

\author{\IEEEauthorblockN{Ugo Louche and Liva Ralaivola}
\IEEEauthorblockA{Qarma, LIF - CNRS, Aix-Marseille University.
Email: firstname.lastname@lif.univ-mrs.fr}
}

\maketitle

\begin{abstract}
Cutting-plane methods are well-studied localization
(and optimization) algorithms. We show that they provide a natural framework to perform machine
learning ---and not just to solve optimization problems posed by machine
learning--- in addition to their intended optimization use. In particular, they
allow one to learn sparse classifiers and provide good compression schemes.
Moreover, we show that very little effort is required to turn them into
effective active learning methods. This last property provides a generic way to
design a whole family of active learning algorithms from existing passive
methods. We present numerical simulations testifying of the relevance of
cutting-plane methods for passive and active learning tasks.
\end{abstract}


\section{Introduction} 
\label{sec:introduction}
We show that localization methods based on cutting planes
provide a natural framework to derive machine learning algorithms for
classification, both in the supervised learning framework and the active
learning framework. Our claim is that cutting plane algorithms, beyond their
optimization purposes, embed features that are beneficial for generalization
purposes. In particular a) under mild conditions, they may provide compression
scheme with a compression rate that is directly related to their aim at rapidly
finding a solution of the localization problem and b) the pivotal step of such
algorithms, namely, the querying step, may be slightly twisted so as to be
active-learning friendly.

In the present paper, we show that existing learning algorithms might
be revisited from the cutting planes point of view. \changeok{Not only will the
active learning SVM procedure of Tong and Koller~\cite{Tong02} fall into this class of
algorithms that might be reinterpreted as one instance of the framework we
describe but so will the so-called Bayes Point Machines~\cite{Herbrich01}  and
an active learning version of it.}{Not only might the active learning SVM procedure of
Tong and Koller~\cite{Tong02} be reinterpreted as an algorithm falling under the
framework we describe but so are the Bayes Point Machines~\cite{Herbrich01}, 
for which we will propose an active learning version of it.}

The problems we are interested in are linear classification problems.
Given a training sample $\trainingset\doteq\{(x_n,y_n)\}_{n\in[N]}$, with
$x_n\in\inputspace\doteq\realset^d,$
$y_n\in\outputspace\doteq\{-1,+1\},$  and $[N]\doteq\{1,\ldots,N\}$, we are looking for a classification
vector $w\in\inputspace$ that is an element of the version space
\begin{equation}
\versionspace_0(\trainingset)\doteq\left\{w\in\inputspace:y_n\langle
w,x_n\rangle\geq 0,\;n\in[N]\right\},
\end{equation}
 of $\trainingset$, i.e. the set of
vectors $w$ from $\inputspace$ such that the corresponding linear
predictors 
\begin{equation}f_w(x)\doteq\sign(\langle w,x\rangle)\end{equation}
make no mistake on
the training set $\trainingset.$  In order to render the exposition
clearer, we make the assumption that the training data are linearly
separable so that $\versionspace_0(\trainingset)$ is not empty. The case where
$\versionspace_0(\trainingset)=\emptyset$ can be tackled with usual machine
learning techniques ---e.g. the ``$\lambda$-trick'' and/or kernels
\cite{Freund99} \cite{Herbrich01}.

Also, for the sake of brevity, we may use $\versionspace_0$ instead of 
$\versionspace_0(\trainingset)$
and thus drop the explicit dependence on $\trainingset.$

With the relevant notation at hand, the problem we are interested in may be stated as:
\begin{align}
\label{eq:metaproblem}
\find  w\in\versionspace_0,
\end{align}
which might be simply
rewritten as the problem of solving a set of linear inequalities
\begin{align}
\label{eq:masterproblem}
\find  w\enskip
\text{ s.t. }&\left\{\begin{array}{l}
w\in\inputspace\\
y_n\left\langle w,x_n\right\rangle\geq 0,\; n\in[N].
\end{array}\right.
\end{align}
There
is a variety of methods in the optimization literature from as back as the 50's that
are available to solve such problems.
Among them, we may mention {\em (over-)relaxation} based
methods~\cite{goffin80MOR,Motzkin:1954:RML}, simplex-based algorithms and, of
course, the Perceptron algorithm and its numerous variants
\cite{Block62,Novikoff62,Rosenblatt58}. Localization methods based on {\em
cutting planes}, or, in short, cutting planes algorithms, are well-studied
algorithms, well-known to be very efficient to solve such problems. We will
show that, when used to solve~\eqref{eq:masterproblem}, i) they naturally
provide compression scheme algorithms~\cite{FloydW95MLJ}, and thus, {\em
learning} algorithms that embed features designed to ensure good generalization
properties and ii) they also set the ground for the development of new active
learning algorithms.

\subsection{Related Works}
Cutting-plane methods provide a family of optimizaton procedures that have
received some interest from the machine learning community~\cite{JoachimsFY09MLJ,FrancS09JMLR,TeoVSL10JMLR}.
However, they have mainly been considered as optimization methods to solve problems
such as those posed by support vector machines or, more generally, regularized risk functionals.
The more profound connection of these methods with {\em learning} algorithms, that is, procedures that are designed in a way 
to ensure generalization ability to the predictor they build (e.g. the Perceptron algorithm) has less been studied; this is
one of the peculiarities of the present paper to discuss this feature---to some extent, the work of~\cite{TeoVSL10JMLR},
which pinpoints how statistical regularization is beneficial for the stabilization of cutting-plane methods, skims over
this connection. Within the vast literature of active learning (see, e.g.~\cite{Settles12Active}),
we may single out a few contributions our work is closely related to; they 
share the common feature of focusing on/exploiting the geometry of the version space.
The query strategies proposed by \cite{golovin10} and \cite{gonen13} are  based
on multiple estimations of the volume of the (potential) version space, which,
when added together might be computationally expensive. In comparison, in the
active learning strategy we derive from the general cutting-plane approach, we
compute our queries from an approximated center of gravity of the version
space, which is computationally equivalent to a single volume estimation.
The work of \cite{balcan07}, who propose a margin-based query strategy
provide theoretical justifications of such strategies and gives insights
on the foundations the work of \cite{Tong02} hinges on.
 Our contribution is to show how the cutting planes
literature and its accompanying worst-case convergence analyzes may
give rise to theoretically supported query strategies that do not have to
hinge on margin-based arguments. \changeok{}{To some extent, our work has
connections with \emph{uncertainty-based active learning} (see, e.g. \cite{lewis1994})
which advocates to query the points whose class is the most uncertain; 
our approach may be re-interpreted as a theoretically motivated
uncertainty measure based on the volume reduction of the version
space.}

\subsection{Outline}
The paper is structured as follows. Section \ref{sec:background} provides some
background to cutting planes methods and their possible application to
learning. Section \ref{sec:results} further explores the connections
between cutting planes and learning algorithms and then provides a way to turn
cutting planes methods into an active learning algorithms. Section
\ref{sec:simulations} reports empirical results for algorithms derived from our
argumentation on the relevance of cutting plane methods to machine learning.


\section{Background}
\label{sec:background}
In this section, we first recall the general form of a cutting plane algorithm
to solve a localization problem. We then specialize this algorithm to the case
where the convex space into which we want to find a point is the version
space associated to training set $D$.
Finally, in order for the reader to get a taste on how cutting planes
algorithms give rise to {\em learning algorithms}, i.e.
algorithms that embed features, namely, they define {\em compression schemes} with targeted small compression size, that are beneficial for
generalization.

\subsection{Vanilla Localization Algorithm with Cutting Planes}
In order to solve a problem like
$$\find w \in\convex,$$
for $\convex$ some closed convex set, a localization algorithm
based on cutting planes works as follows (see also the synthetic depiction in
Algorithm~\ref{tab:vanilla_cp}) \cite{Kelley60SIAM}.
The algorithm maintains and iteratively refines (i.e. reduces) a closed convex
set $\convex^t$ that is known to contain $\convex$. From $\convex^t$ a {\em
query point} is computed ---there are several ways to compute such query
points; we will mention some when specializing localization methods to the
specific problem of finding a point in the version space later on--- which
leads to two possible options: either a) $w^t$ is in $\convex$ and the tackled
problem is solved or b) $w^t\not\in\convex$. In the latter case, a so-called
{\em cutting plane oracle} is queried with $w^t$ upon which it returns the
parameters $(a_t,b_t)$ of the hyperplane $\{z:\langle
a_t,z\rangle = b_t\}$ such that this hyperplane separates $w_t$ from  $\convex$,
i.e., $\forall w\in\convex, \langle a_t,w\rangle > b_t$ and $\langle
a_t,w_t\rangle \leq b_t$. The hyperplane is used to reduce $\convex^t$ into
$\convex^{t}\inter\{w:\langle a_t,w\rangle > b_t\}$ (which still contains
$\convex$). For the specific problem~\eqref{eq:masterproblem} of finding a point in the version space,
the cutting planes rendered by the oracle will be such that $b_t=0$.
 
\begin{algorithm}
\caption{Classical Cutting Plane Algorithm for the localization of
$w\in\convex$. \label{tab:vanilla_cp}}
\begin{algorithmic}[1]
\Ensure $w\in\convex$
\State compute $\convex^0$, such that $\convex^0\supset\convex$ and 
$\convex^0$ is convex and closed.
\State $t\leftarrow 0$
\Repeat
	\State Compute {\em query point} $w^t$ in $\convex^t$
	\State Ask the {\em cutting plane oracle} whether $w^t\in\convex$ 
	\If { $w^t \notin \convex$ }
		\State Receive a cutting plane $(a_t,b_t)$
		\State $\convex^{t+1}\leftarrow\convex^{t}
					\inter\left\{x:\langle a_t,x\rangle > b_t
					\right\}$
		\State $t\leftarrow t+1$
	\EndIf
\Until{$w^t \in \convex$}
\State \Return $w^t$
\end{algorithmic}
\end{algorithm}
\begin{algorithm}
\caption{The Cutting Plane approach instantiated to the problem of finding a
point from the version space of $D$. \label{tab:versionspace_cp}}
\begin{algorithmic}[1]
\Ensure $w$ solution of Problem \eqref{eq:trueproblem}
\State $\convex^0\leftarrow\ball$
\State $t\leftarrow 0$
\Repeat
	\State $w^t \leftarrow \Call{Query}{\convex^t}$ \label{line:query} \Comment
	Compute {\em query point} $w^t$ in $\convex^t$
	\If {$w^t \notin \versionspace$}
		\State $n_t \leftarrow \Call{Pick}{\convex^t, w^t}$ \label{line:n_t}
		\Comment pick a cutting plane index 
		\State $\convex^{t+1}\leftarrow\convex^{t}
		\inter\left\{z:y_{n_t}\dotprod{z}{x_{n_t}} > 0 \right\}$
		\State $t\leftarrow t+1$
	\EndIf
\Until{$w^t \in \versionspace$}
\State \Return $w^t$
\end{algorithmic}
\end{algorithm}

\subsection{Cutting Planes to Localize a Point in the Version Space}
Note that problem~\eqref{eq:masterproblem} is scale-insensitive: if
$w\in\versionspace_0$, then $\lambda w\in\versionspace_0$ as well for
any $\lambda>0.$ In order to get rid of this degree of freedom {\em and}
to make the use of cutting planes algorithms possible (they require the sets $\convex^t$ to be bounded), we will restrict
ourselves to  finding  a solution vector $w^*$ both in
 $\versionspace_0$ and in the unit ball 
\begin{equation}
\ball\doteq\left\{w\in\inputspace:\|w\|\leq 1\right\}.
\end{equation}
In other words, we will be looking for $w^*$ in the constrained version space
\begin{equation}
\versionspace\doteq\versionspace_0\inter\ball,
\end{equation}
and the problem we face is therefore:
\begin{align}
\label{eq:trueproblem}
\find  w\enskip
\text{ such that }&\left\{\begin{array}{l}
w\in\ball\\
y_n\langle w,x_n\rangle\geq 0,\;n\in[N]
\end{array}\right.
\end{align}

In the case of Problem~\eqref{eq:trueproblem}, the localization algorithm
described earlier translates into the one given in
Algorithm~\ref{tab:versionspace_cp}. The following changes might be observed
when comparing with Algorithm~\ref{tab:vanilla_cp}: $\convex^0$ is now
initialized to $\ball$, the unit ball, and the cutting planes
are picked among the hyperplanes ---i.e. the points of $\trainingset$---
defining the version space.

\subsection{Query Point Generation}
In both Algorithm~\ref{tab:vanilla_cp} and Algorithm~\ref{tab:versionspace_cp},
the strategy to compute a query point is left unspecified. There actually exist
many ways to compute such query points, but they all aim at a query point which
calls for a cutting plane that will divide the current
enclosing convex set $\convex^t$ in the most stringent way. It turns out that
such guarantee might be expected when the query point is as close as possible
to the `center' of $\convex^t$, so that the volume of $\convex^t$ is reduced with
a positive factor ---just as in the well-known bisection method, where the factor is 1/2. The center
of $\convex^t$ is not defined in a unique way, but for the most popular query
methods, it may refer to: a) the center of gravity of $\convex^t$, b) the
center of the largest ball inscribed in $\convex^t$, which is called the {\em Chebyshev center} or
c) the analytic center, which we will not discuss further (the interested
reader may refer to~\cite{Nesterov95} for further details). We may mention three
things regarding the center of gravity: i) it is NP-hard\footnote{To be precise, it is actually \#P-hard.}
 to exactly compute
the center of gravity of a convex set in an arbitrary $n$-dimensional space
even though some practical approximation algorithms exist; ii) it is the query
point that comes with the best guarantees in terms of convergence speed of the
cutting plane method~\cite{DabbeneSP10SIAM};  iii) the center of gravity of a
polytope is precisely the point that is looked for in the case of the
theoretically founded Bayes Point Machines of \cite{Herbrich01}.

\section{Results}
\label{sec:results}
This section is devoted to some algorithmic results that can be
obtained when analyzing the behavior of cutting-plane methods for the localization
of a point in the version space.

\subsection{Cutting Planes Provide Sample Compression Schemes}
Let $\Dset\doteq\union_{n=1}^{\infty}(\inputspace\times\outputspace)^n$ be the set 
of all finite training samples made of pairs from $\inputspace\times\outputspace$.
In short, sample compression schemes~\cite{FloydW95MLJ} are learning algorithms
$\algo:\Dset\rightarrow \outputspace^{\inputspace}$ that are associated with a compression function $\compression:\Dset\rightarrow\Dset$ so that, given any training sample $D$, 
we have $\algo(D)=\algo(\compression(D)).$
Sample compression schemes are especially interesting when the size $|\compression(D)|$
of the
compression set $\compression(D)$ is small. Indeed, generalization guarantees that come with these
procedures say that the generalization error of $f_D\doteq\algo(D)$ is, with high probability
(over the random draw of training set $D$ according to an unknown and fix distribution) bounded from above by something like
\begin{equationsize}{\footnotesize}
\label{eq:cs_generalization}
\frac{1}{N-|\compression(D)|}\sum_{n=1}^N\indicator{f_D(x_n)\neq y_n}+\mathcal{O}\left(\sqrt{\frac{1}{N-|\compression(D)|}}\right)
\end{equationsize}
 (see \cite{FloydW95MLJ,graepel05pacbayesian} for a precise statement of the bound).
Among the most well-known learning compression schemes, we find the Perceptron and the
Support Vector Machines. 

We claim that Algorithm~\ref{tab:versionspace_cp}, which finds a point in the version space using cutting planes, may be a compression scheme.
\begin{proposition}
\label{prop:cs}
\changeok{If the method to find a query point (line~\ref{line:query},
Algorithm~\ref{tab:versionspace_cp}) 
is deterministic and the procedure
implemented to choose the next $n_t$ (line~\ref{line:n_t}) is deterministic and
only depends on $\convex^t$ and 
$w^t$ then Algorithm~\ref{tab:versionspace_cp}
is a sample compression scheme.Let $\convex$ and $\convex'$ two equivalent
representations of the same version space. }{If $\Call{Query}{\convex^t}$
(line~\ref{line:query}, Algorithm~\ref{tab:versionspace_cp}) and
$\Call{Pick}{\convex^t,w^t}$ (line~\ref{line:n_t}) are both deterministic then
Algorithm~\ref{tab:versionspace_cp} is a sample compression scheme.}
\end{proposition}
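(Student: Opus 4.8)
\emph{Proof sketch.} The plan is to read the compression set off a single run of Algorithm~\ref{tab:versionspace_cp} and then to show that re-running the algorithm on that set reproduces the run verbatim. Fix a sample $D$, run Algorithm~\ref{tab:versionspace_cp} on it, let $T$ be the number of cutting planes it adds before halting, and let $n_0,\dots,n_{T-1}$ (line~\ref{line:n_t}), the query points $w^0,\dots,w^T$ (line~\ref{line:query}) and the enclosing sets $\convex^0=\ball,\dots,\convex^T$ be the objects it produces. Define the compression function by $\compression(D)\doteq\{(x_{n_t},y_{n_t}):0\le t<T\}$. Since $w^t\in\convex^t$ while the cut added at step $t$ excludes $w^t$ from $\convex^{t+1}$, the same index cannot be chosen twice (if $n_t=n_s$ for some $s<t$, then $\convex^t$ already lies in $\{z:y_{n_t}\dotprod{z}{x_{n_t}}>0\}$, contradicting that $w^t\in\convex^t$ is cut off by this very constraint); hence the $n_t$ are distinct, $\compression(D)\subseteq D$, and $|\compression(D)|=T$.

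The core is the claim, proved by induction on $t$, that the run of Algorithm~\ref{tab:versionspace_cp} on the input $\compression(D)$ generates exactly the same sequence $\convex^0,w^0,n_0,\convex^1,w^1,n_1,\dots$ as the run on $D$. The base case is immediate: $\convex^0=\ball$ does not depend on the data, and determinism of \textsc{Query} then forces the same $w^0$. For the inductive step, suppose the two runs agree through $\convex^t$ and $w^t$. If $t<T$, then $w^t\notin\versionspace(D)$ in the original run, so \textsc{Pick} returns an index $n_t$ of a strictly violated constraint, $y_{n_t}\dotprod{w^t}{x_{n_t}}<0$; since $(x_{n_t},y_{n_t})\in\compression(D)$, this same inequality certifies $w^t\notin\versionspace(\compression(D))$, so the run on $\compression(D)$ also takes the conditional branch. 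Moreover \textsc{Pick}, being deterministic and a function of $(\convex^t,w^t)$ only, ranks candidate cuts in a way that does not see the rest of the sample; $n_t$ is the top-ranked \emph{violated} candidate among all of $D$, it is still present in $\compression(D)$, and the pool of violated candidates has only shrunk, so $n_t$ is still the top-ranked violated candidate and is chosen again. Hence $\convex^{t+1}$ coincides in the two runs, and so does $w^{t+1}$. If instead $t=T$, then $w^T\in\versionspace(D)$; deleting points only enlarges the version space, so $\versionspace(D)\subseteq\versionspace(\compression(D))$ and $w^T\in\versionspace(\compression(D))$, whence the run on $\compression(D)$ halts with the same output $w^T$. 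Therefore $\algo(\compression(D))=f_{w^T}=\algo(D)$, which is exactly the defining property of a sample compression scheme.

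I expect the only delicate point to be the re-selection of $n_t$ on the reduced sample: that is where the hypothesis that \textsc{Pick} is deterministic and depends only on $\convex^t$ and $w^t$ carries the weight, and it should be made precise as the statement that the selection rule induces a sample-independent priority on candidate cuts, so that restricting the candidate pool to any subset still containing the previously chosen cut leaves the choice unchanged. Everything else is bookkeeping, with monotonicity of the version space under deletion of constraints handling both the ``still violated for $t<T$'' and the ``accepted at $t=T$'' directions. I would close with the observation that $|\compression(D)|=T$ is precisely the iteration count of the cutting-plane method, so that any worst-case convergence bound for the chosen \textsc{Query} rule (for instance the $\mathcal{O}(d\log(\cdot))$-type rate associated with center-of-gravity queries) bounds the compression size and hence, through~\eqref{eq:cs_generalization}, the generalization error --- which is the qualitative message this section is after.
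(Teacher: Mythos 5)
Your proof takes essentially the same route as the paper's own (very terse) argument: define the compression set as the training examples that generated the cutting planes, and show by induction over the iterations, using determinism of \textsc{Query} and \textsc{Pick}, that re-running Algorithm~\ref{tab:versionspace_cp} on that set reproduces the original run and hence the same predictor. The difference is only in completeness: you fill in the bookkeeping (distinctness of the $n_t$, the termination step via monotonicity of the version space) and make explicit the one point the paper leaves implicit, namely that \textsc{Pick}'s choice must be stable under shrinking the candidate pool, which is precisely the intended reading of the hypothesis that it depends only on $\convex^t$ and $w^t$.
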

\begin{proof}
If the compression set is made of the training examples that define the cutting planes, this result 
is a direct consequence of the structure of
Algorithm~\ref{tab:versionspace_cp}. A proof by induction that essentially hinges on the 
fact that, at each iteration $t$, the next query point is deterministically computed from $\convex^t$ (only) gives the result.
\end{proof}
A few observations can be made. First, the learning algorithm obtained with the
assumptions of Proposition~\ref{prop:cs} is a {\em process sample compression
scheme}, that is, even if we interrupt the learning before convergence has
occurred, running the algorithm on the partial compression scheme obtained so
far gives exactly the same predictor. Second, it is obviously an aim to have
fast convergence of the localization procedure, where fast convergence means
few iterations of the cutting-plane procedure. This directly translates into
the idea of finding a point in the version space that is expressed as a
combination as few vectors as possible, which, by~\eqref{eq:cs_generalization}, is very beneficial for
generalization purposes. Later, we will see that there are settings for cutting-plane methods that 
come with guarantees on the number of iterations, and therefore on
$|\compression(D)|$, to reach convergence.

\begin{algorithm}[t]
\begin{algorithmic}[1]
\Ensure Problem~\eqref{eq:trueproblem}
\State $\convex^0\leftarrow\ball$
\State $t \leftarrow 1$, $w^0 \leftarrow 0$, $\tilde{w}^0 \leftarrow 0$
\Repeat
	\State $\tilde{w}^{t} \leftarrow \Call{Perceptron}
	{\tilde{w}^{t-1}, x_{n_0}, \cdots, x_{n_t}}$
	\State $w^t \leftarrow {\tilde{w}^t}/{\norm{\tilde{w}^t}}$
	\If {$w^t \notin \versionspace$}
		\State Pick a cutting plane index $n_t$ \label{line:perc:pick}
		\State $\convex^{t+1}\leftarrow\convex^
		{t}\inter\left\{z:y_{n_t}\dotprod{z}{x_{n_t}}
		\geq 0\right\}$
		\State $t\leftarrow t+1$
	\EndIf
\Until{$w^t \in \versionspace$}
\State \Return $w^t$
\\
\Function{Perceptron}{$w^{start}, x_{n_0}, \cdots , x_{n_N}$}
\State $t\leftarrow 0$
\State $w^0 \leftarrow w^{start}$
\While {$\exists n_i: \dotprod{w^t}{x_{n_i}} < 0$} \label{line:perc:while}
	\State $w^{t+1} \leftarrow w^t + x_{n_i}$ \label{step:percUpdate}
	\State $t\leftarrow t+1$
\EndWhile
\State \Return $w^t$
\EndFunction
\end{algorithmic}
\caption{Top : A Perceptron-based localization algorithm for the case of
problem~\eqref{eq:trueproblem}. Bottom : The slightly modified
perceptron algorithm for compression scheme.\label{tab:perceptron_cp}}
\end{algorithm}

\subsection{Perceptron-based Localization Algorithm}

One of the simplest ways to compute a query point $w^t$ for Algorithm~\ref{tab:versionspace_cp} is to run Rosenblatt's Perceptron algorithm
\cite{Rosenblatt58} at each step and query the normalized solution $w^t =
{\tilde{w}^t}/{\norm{\tilde{w}^t}}$. Intuitively, we may expect $\tilde{w}^{t+1}$
to be `close' to $\tilde{w}^t$ because $\convex^{t+1}$ is essentially the intersection
of $\convex^t$ with a cutting plane and much of the geometry of $\convex^t$
might be preserved. According to this intuition,
$\tilde{w}^t$ should be a good starting point for the Perceptron algorithm
to be run and to have it output $\tilde{w}_{t+1}$. Algorithm \ref{tab:perceptron_cp}
implements that idea, and reuses the last query point as an initialization
vector for the Perceptron to compute the next query point.
\changeok{}{Additionally, note that for Algorithm \ref{tab:perceptron_cp} to match Proposition \ref{prop:cs} a little
technicality is needed: we require that datapoints are selected in the
lexicographical order\footnote{This is an arbitrary choice and any total order over $\realset^d$ can
be used instead} when multiple choices are possible (e.g.
line \ref{line:perc:pick} and \ref{line:perc:while}).} It turns out this simple querying procedure enjoys the same convergence rate than a regular Perceptron,
with the added empirically observed benefit of providing stronger compression
(see Section~\ref{sec:simulations} for empirical results).
\begin{proposition} \label{prop:ActiveNovikoff}
Consider Problem~(\ref{eq:trueproblem}) and let $\gamma$ be the radius of the largest inscribed sphere in
$\versionspace$. Define $M$ the number of Perceptron updates performed by the
Perceptron-based Localization Algorithm~\ref{tab:perceptron_cp} (i.e. $M$ is
the number of times line~\ref{step:percUpdate} of {\sc Perceptron}() of
Algorithm~\ref{tab:perceptron_cp} is executed).
Then the following holds: $M \leq
1/{\gamma^2}$.
\end{proposition}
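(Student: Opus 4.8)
The statement is essentially a version of Novikoff's theorem, so the plan is to run the classical Perceptron mistake-bound argument, being careful that across the outer loop of Algorithm~\ref{tab:perceptron_cp} the Perceptron weight vector is never reset to zero but is carried over from one call of {\sc Perceptron}() to the next. First I would fix a unit vector $u$ realizing the largest inscribed sphere in $\versionspace$; since that sphere has radius $\gamma$ and is contained in $\versionspace\subseteq\ball$, for every training point we have $y_n\dotprod{u}{x_n}\geq\gamma\norm{x_n}$, and because $x_n$ can be taken with $\norm{x_n}=1$ (or absorbed into $\gamma$) this reads $y_n\dotprod{u}{x_n}\geq\gamma$ for all $n\in[N]$. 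This is the only place the geometry of the version space enters: the Chebyshev radius $\gamma$ of $\versionspace$ is exactly the margin of the data.

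Next I would track the two standard quantities across \emph{all} $M$ update steps, indexed globally regardless of which outer iteration they occur in. Let $v^k$ denote the Perceptron vector after the $k$-th update (with $v^0=0$, the common initialization $\tilde w^0=0$). Each update is $v^{k}=v^{k-1}+x_{n_i}$ for some point with $\dotprod{v^{k-1}}{x_{n_i}}<0$, i.e.\ $y_{n_i}\dotprod{v^{k-1}}{x_{n_i}}<0$ after folding the sign into $x_{n_i}$. The lower bound comes from $\dotprod{v^{k}}{u}=\dotprod{v^{k-1}}{u}+y_{n_i}\dotprod{x_{n_i}}{u}\geq\dotprod{v^{k-1}}{u}+\gamma$, so by induction $\dotprod{v^{M}}{u}\geq M\gamma$. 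The upper bound comes from $\norm{v^{k}}^2=\norm{v^{k-1}}^2+2y_{n_i}\dotprod{v^{k-1}}{x_{n_i}}+\norm{x_{n_i}}^2\leq\norm{v^{k-1}}^2+1$, so $\norm{v^{M}}^2\leq M$. Combining with Cauchy--Schwarz, $M\gamma\leq\dotprod{v^{M}}{u}\leq\norm{v^{M}}\,\norm{u}\leq\sqrt M$, which gives $M\leq 1/\gamma^2$.

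The one subtlety — and the only real thing to check — is that the carry-over of $\tilde w^t$ between successive outer iterations does not break the argument. It does not: the inequalities above only ever use that an update happens on a misclassified point, never that the starting vector was zero or small, so the global telescoping through all $M$ updates is legitimate even though the updates are interleaved with the cutting-plane refinements of $\convex^t$ and the renormalizations $w^t=\tilde w^t/\norm{\tilde w^t}$ (the renormalization step does not touch $\tilde w^t$ itself). I would also note that the cutting planes picked in line~\ref{line:perc:pick} are a subset of the original $x_n$'s, so every update point satisfies $y_{n_i}\dotprod{u}{x_{n_i}}\geq\gamma$ — no new constraints are ever introduced. Hence the classical potential-function bound applies verbatim and yields $M\leq 1/\gamma^2$.

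**Main obstacle.** There is no serious obstacle; the argument is routine once one observes that the inter-iteration warm-starting is harmless. The only place demanding a sentence of care is the identification of the Chebyshev radius $\gamma$ of $\versionspace$ with the data margin, i.e.\ deriving $y_n\dotprod{u}{x_n}\geq\gamma$ from the fact that the $\gamma$-ball around $u$ lies inside the version space.
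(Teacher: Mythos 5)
Your proof is correct and follows essentially the same route as the paper: identify the Chebyshev radius $\gamma$ of $\versionspace$ with a lower bound on the (normalized) margin achieved by the center $u$ of the largest inscribed ball, observe that the warm-started Perceptron calls concatenate into a single Perceptron run on a sequence of points from $\trainingset$ all separated with margin $\gamma$, and apply the Novikoff mistake bound. The only difference is cosmetic: the paper invokes Novikoff's theorem as a black box, whereas you re-derive it via the standard potential-function argument.
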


\begin{proof}
We recall that the usual definition of the margin of $\trainingset$ is
$\min_{x \in \trainingset} \dotprod{w^*}{x}$ and note that $\gamma$ is related
to it since $\forall n\in[N],\; \dotprod{w^*}{x_n} / \Vert x_n \Vert_2 \geq
\gamma$. Let $\compression \doteq \lbrace a_1, \ldots a_M \rbrace$ be the sequence of points used to perform 
Perceptron updates across a complete execution of Algorithm
\ref{tab:perceptron_cp}. Thus, $\compression$ is a sequence from $\trainingset$ (with
possible duplicates) and $w^*$ achieves a margin at least $\margin$ with
all points in $\compression$. From \cite{Block62,Novikoff62} we know that the number $M$ of Perceptron updates
on any arbitrary sequence linearly separable with margin $\margin$ is no more than
$1/{\margin^2}$. Since we use $w^t$ as a starting point to compute $w^{t+1}$, the
execution of the cutting-plane algorithm is tied to the execution of the
Perceptron algorithm on $\compression$. Therefore, there is less than
$1/{\margin^2}$ Perceptron updates during the execution of the algorithm.
Alternatively, $\vert \compression \vert \leq 1/{\margin^2}$ since all points
in $\compression$ correspond to a Perceptron update, thus a mistake.
\end{proof} 

On a side note, the same argument can be applied to obtain  
similar results with most Perceptron-like learning procedures (see for instance \cite{Li2002,Crammer2006}).

\subsection{Center of Gravity and Approximations}

\begin{figure}[t]
\centering
\includegraphics[width=0.99\linewidth]{./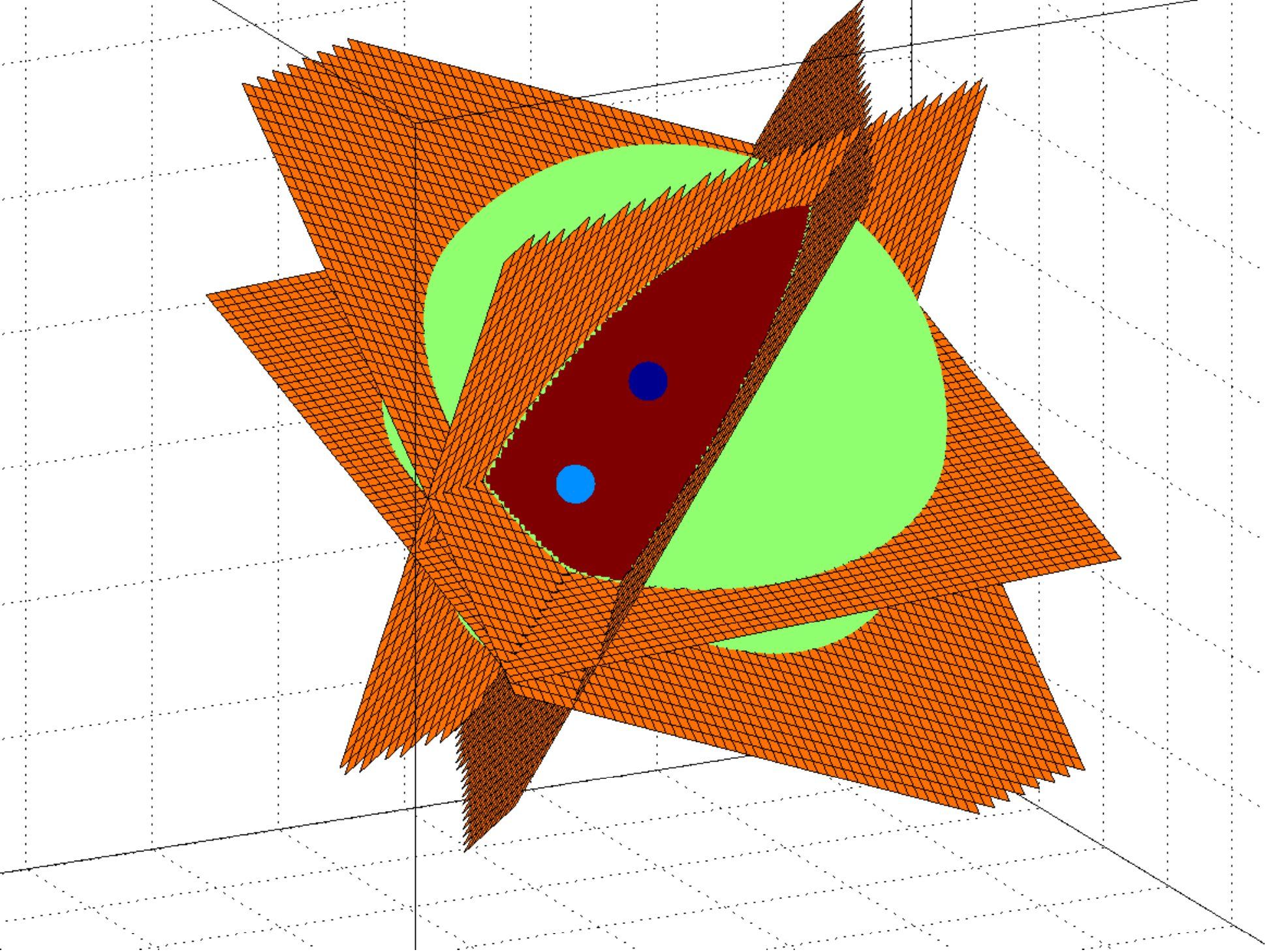}
\caption{An Example of version space where the Chebyshev Center
(light blue) is a bad approximation of the gravity center (dark blue).}
\end{figure}

The question of computing a query point $w^t$ is of central importance in
cutting-plane localization algorithms. As we have seen, a simple Perceptron 
can already yield interesting computational results for that matter.
A more assiduous analysis of this question can be conducted by looking at the
volume reduction $\vol(\convex^{t+1})/\vol(\convex^t)$ of $\convex^t$ from one iteration
to the next.
The notion of center of gravity is going to be pivotal to this end.
\begin{definition}[Center of Gravity]
Let $\convex$ be a closed set in $\realset^n$. The \emph{center of gravity} (CG)
$\cg(\convex)$  of $\convex$ is defined by as 
$ \cg(\convex)\doteq \int_{\convex} z dz/\int_{\convex} dz.$
\end{definition}

The center of gravity is deeply tied to the volume of $\convex^t$ and plays a
central role in devising cutting-plane algorithms for which
the volume reduction $\vol(\convex^{t+1})/\vol(\convex^t)$ is the largest. Theorem \ref{th:magic}
reports one of the most fundamental property of the center of gravity (see
\cite{grunbaum60, Newman65, Levin65, Boyd08})
\begin{theorem}[Partition of Convex bodies] \label{th:magic}
Let $\convex \in \realset^d$ a convex body of center of gravity $\cg(\convex)$
and $h$ a hyperplane such that $\cg(\convex) \in h$. Thus, $h$ divide $\convex$ in two
subsets $\convex_1$ and $\convex_2$ and the following relations hold for $i =
1,2$: $\vol(\convex_i) \geq e^{-1} \vol(\convex)$
\end{theorem}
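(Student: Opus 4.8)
The plan is to prove the Grünbaum-type inequality $\vol(\convex_i)\ge e^{-1}\vol(\convex)$ by reducing the general case to a one-dimensional computation via symmetrization, then optimizing the worst case over a simple family of bodies. First I would normalize: place the center of gravity $\cg(\convex)$ at the origin and let $h=\{z:\dotprod{u}{z}=0\}$ for a unit vector $u$, so $\convex_1=\convex\cap\{z:\dotprod{u}{z}\ge 0\}$ and $\convex_2$ the complementary half. By scaling we may also assume $\vol(\convex)=1$. The quantity to bound is then $\vol(\convex_1)=\int_{\convex\cap\{\dotprod{u}{z}\ge 0\}}dz$, and by symmetry the same bound for $\convex_2$ will follow once we have it for the worst orientation.

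The key reduction is to replace $\convex$ by a body with the same one-dimensional marginal along $u$ but which is ``as concentrated as possible,'' namely a cone. Concretely, let $\phi(s)\doteq\vol(\{z\in\convex:\dotprod{u}{z}=s\})$ be the $(d-1)$-dimensional section area, which is supported on some interval $[-a,b]$ with $a,b>0$ (both are positive because the origin is the centroid, so mass lies on both sides). By the Brunn–Minkowski inequality $\phi^{1/(d-1)}$ is concave on $[-a,b]$, the centroid condition reads $\int_{-a}^b s\,\phi(s)\,ds=0$, and the target volume is $\int_0^b\phi(s)\,ds$. The plan is then to show that among all such $\phi$, the minimum of $\int_0^b\phi$ subject to $\int\phi=1$ and $\int s\phi=0$ is attained by the ``conical'' profile $\phi(s)=c\,(a-s)^{d-1}$ supported on a single interval $[-a',b']$ degenerating to a cone with apex on the negative side — i.e. we compare $\convex$ against a simplex/cone. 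This is the standard Grünbaum argument; the concavity of $\phi^{1/(d-1)}$ lets one push mass toward the apex without increasing $\int_0^b\phi$ while preserving the two linear constraints.

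Once reduced to the cone, the computation is explicit: for a $d$-dimensional cone of height $h=a+b$ with apex at $s=-a$ and base at $s=b$, one has $\phi(s)\propto (s+a)^{d-1}$, the centroid condition forces $b=a/d$ (the centroid of a cone sits at $1/(d+1)$ of the way up... more precisely at distance $h/(d+1)$ from the base, giving the relation between $a$ and $b$), and then
\[
\vol(\convex_1)=\frac{\int_0^b (s+a)^{d-1}\,ds}{\int_{-a}^b (s+a)^{d-1}\,ds}
=1-\left(\frac{a}{a+b}\right)^{d}=1-\left(\frac{d}{d+1}\right)^{d}.
\]
Since $\left(\frac{d}{d+1}\right)^{d}=\left(1-\frac{1}{d+1}\right)^{d}< e^{-1+...}$, a short estimate gives $1-(d/(d+1))^d\ge 1-e^{-1}>e^{-1}$; and symmetrically $\vol(\convex_2)\ge e^{-1}$ as well (the more delicate side is the smaller one, the cap near the apex, which yields exactly $(d/(d+1))^d\to e^{-1}$ from above, so $\ge e^{-1}$). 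I would close by noting the bound is tight in the limit of a high-dimensional cone cut through its centroid orthogonally to its axis.

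The main obstacle I anticipate is making the symmetrization/extremal step rigorous: one must argue that the cone genuinely minimizes $\int_0^b\phi$ over all Brunn–Minkowski-admissible profiles with the prescribed zeroth and first moments, rather than merely exhibiting the cone as a plausible candidate. The clean way is a variational/exchange argument — given any admissible $\phi$, replace it on each side of $0$ by the linear-in-$\phi^{1/(d-1)}$ profile with the same mass and same first moment on that side, which by concavity can only decrease (or preserve) $\int_0^b\phi$; then a second exchange merges the two pieces into a single cone. Handling the boundary cases (when $a$ or $b$ is already tiny) and verifying that the extremal profile stays within the admissible class are the fiddly points, but they are routine once the concavity of $s\mapsto\phi(s)^{1/(d-1)}$ is invoked. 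Since this is a classical result, I would cite \cite{grunbaum60} for the detailed verification and present the argument above as the skeleton.
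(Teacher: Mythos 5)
Your skeleton is essentially the paper's own (Grünbaum's) argument: reduce the general body to a cone --- you via the one-dimensional section profile $\phi$ and Brunn--Minkowski concavity of $\phi^{1/(d-1)}$, the paper's appendix via spherical symmetrization and an explicit comparison cone with matched base and matched volumes whose centroid is shifted toward the apex --- and then compute the split of a cone by the hyperplane through its centroid. Deferring the rigorous extremal/exchange step to \cite{grunbaum60} is reasonable for this classical statement, and your cone computation (centroid condition $b=a/d$, split ratio $1-(a/(a+b))^d$) is the same calculation the paper performs in the form $(1-\tfrac{1}{n+2})^{n+1}$ with $d=n+1$.

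Two local statements are wrong as written, though neither sinks the argument. First, the extremal cone is mislabeled: to minimize $\int_0^b\phi$ (the volume of $\convex_1$, the positive side) the worst profile is the cone whose apex lies on the \emph{positive} side --- consistent with your formula $\phi(s)=c\,(a-s)^{d-1}$, which vanishes at $s=a>0$, but not with your words ``apex on the negative side''; your explicit computation then uses the apex-negative cone and so evaluates the large (base-side) piece $1-(d/(d+1))^d$ rather than the extremal small one. Second, the estimate ``$1-(d/(d+1))^d\ge 1-e^{-1}$'' is backwards: $(d/(d+1))^d=(1+1/d)^{-d}$ decreases to $e^{-1}$ from above, so $(d/(d+1))^d\ge e^{-1}$ --- which is exactly what you need for the delicate apex-side cap --- while the base side is handled by the cruder monotonicity bound $1-(d/(d+1))^d\ge 1/2>e^{-1}$ (its value at $d=1$). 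With the two sides stated the right way round (cap side equal to $(d/(d+1))^d\ge e^{-1}$, base side at least $1/2$), your proof matches the paper's and yields the claimed bound for both $\convex_1$ and $\convex_2$.
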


The center of gravity method proposed by \cite{Newman65, Levin65}
consists in querying $w^t = \cg(\convex^t)$ and typically have a very
fast convergence rate as the version space is almost halved at each step. More
precisely, a direct consequence of Theorem~\ref{th:magic} is that the volume of
$\convex^t$ is bounded by $\vol(\convex^t) \leq (1 - 1/e)^t
\vol(\convex^0)$. 
However, computing the center a gravity is hard, making the
center of gravity method impractical. Instead, one has to consider structural or
numerical approximations to the center of gravity.

\begin{definition}[Chebyshev's Center]
Let $\convex$ a set in $\realset^n$. \emph{Chebyshev's center} (CC) of $\convex$,
$\cc(\convex)$ is the center of the largest inscribed ball in $\convex$:
$$\cc(\convex) = \arg\min_{\hat{z}}\max_z \norm{z - \hat{z}}^2.$$
\end{definition}
Chebyshev's center is used as a computationally efficient
approximation of the center of gravity for cutting-plane algorithms since the
late 70's \cite{Elzinga75} (see, e.g. \cite{Boyd04} for a linear formulation of the 
problem). Unfortunately, the interesting property of Theorem \ref{th:magic} does
not carry over with Chebyshev's center. One problem in machine learning related to
 Chebyshev's center is the extensively studied Support Vector Machine (\svm)
\cite{Vapnik95} defined as :
\begin{align}
\label{eq:svm}
\min_w \frac{1}{2} \Vert w \Vert_2^2 \enskip
\text{ s.t. }&\left\{\begin{array}{l}
w\in\inputspace\\
y_n\left\langle w,x_n\right\rangle \geq 1,\; n\in[N].
\end{array}\right.
\end{align}
A notable property of the \svm is that
its solution $w_{\svm}$ is closely related to the center of the largest
inscribed ball in $\versionspace$  and is an
approximation of the center of gravity \cite{Herbrich01}. Indeed,
$w_{\svm}$ is actually a rescaled Chebyshev's center \cite{Tong02}
\cite{Herbrich01}.

On the other hand, numerical approximations aim at
finding a point that is in the close neighborhood of the center of gravity.
One of the contributions of this paper is to give a generalized version of
Theorem \ref{th:magic} for approximations of the center of gravity, thus laying
a theoretical justification for these methods.

\begin{theorem}[Generalized Partition of Convex bodies] \label{th:generalized}
Let $\convex$ be a closed convex body in $\realset^d$ and $\cg(\convex)$ its center of gravity.
Let $h_{x}$ a hyperplane of normal vector $x$, $\norm{x} = 1$
and define the upper (resp. lower) partition $\convex^+$ (resp. $\convex^-$) of
$\convex$ by $h_{x}$ as 
\begin{align*}
\convex^+ \doteq \convex \cap \left\{ w \in \realset^d:
\dotprod{x}{w} \geq 0 \right\}\\
\convex^- \doteq \convex \cap \left\{ w \in \realset^d:
\dotprod{x}{w} < 0 \right\}.
\end{align*}
 The following holds true: if $\cg(\convex) + \Lambda x
\in \convex^+$ then 
\[
\vol(\convex^+)/\vol(\convex) \geq e^{-1} (1 - \lambda)^{d},
\]
where
\begin{align*}
\Lambda = \lambda \Theta_d \frac{\vol(\convex) H_{\convex^+}}{R^{d}
H_{\convex^-}},
\end{align*}
with $\lambda \in \realset$ an arbitrary real, $\Theta_d$ a constant depending
only on $d$, $R$ the radius of the $(d-1)$-dimensional ball $B$ of
volume $\vol\left[ B \right] \doteq \vol\left[ \convex \cap
\left\{ w \in \realset^d :
\dotprod{x}{w} = 0 \right\}  \right]$ and $H_{\convex^+} = \max_{a \in
\convex^+} a^Tx$ (resp.
$H_{\convex^-} = \min_{a \in \convex^-} a^T x$)
\end{theorem}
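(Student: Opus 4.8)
The plan is to reduce the statement to a one-dimensional comparison along the normal direction $x$, and then to control the "slab" of $\convex$ that lies between the hyperplane $h_x$ through the origin and the parallel hyperplane through $\cg(\convex)+\Lambda x$. Write $t\mapsto A(t)$ for the $(d-1)$-dimensional volume of the slice $\convex\cap\{w:\dotprod{x}{w}=t\}$, so that $\vol(\convex)=\int_{H_{\convex^-}}^{H_{\convex^+}}A(t)\,dt$ and $\vol(\convex^+)=\int_0^{H_{\convex^+}}A(t)\,dt$. The quantity $\vol[B]$ in the statement is exactly $A(0)=\Theta_d R^{d-1}$ for the appropriate constant $\Theta_d$ (the volume of a $(d-1)$-ball of radius $R$), which is where $R$, $\Theta_d$ and the exponent $d-1$ enter; note the theorem writes $R^d$ in the denominator of $\Lambda$, and I would double-check whether the intended exponent is $d-1$ or $d$ when I set up this identification. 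The Brunn--Minkowski inequality tells us that $A(t)^{1/(d-1)}$ is concave on its support, which is the single structural fact I will lean on throughout.

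First I would handle the case $\cg(\convex)\in h_x$, i.e. recover Theorem~\ref{th:magic}: concavity of $A^{1/(d-1)}$ plus the centroid condition forces $\vol(\convex^+)\geq e^{-1}\vol(\convex)$, and more precisely it forces a lower bound on the value $A(0)$ of the cross-section at the centroid, of the form $A(0)\geq c_d\,\vol(\convex)/R_{\max}$ where $R_{\max}$ measures the extent of $\convex$ along $x$ (this is the standard Grünbaum-type argument, comparing $\convex$ to the cone with the same base and apex that shares its centroid). Then I would translate: moving the cutting hyperplane from $h_x$ to the parallel plane through $\cg(\convex)+\Lambda x$ removes from $\convex^+$ a slab whose volume is at most $\Lambda\cdot\max_{0\le t\le\Lambda}A(t)\le \Lambda\cdot A(0)\cdot(\text{Lipschitz-type factor})$, using concavity of $A^{1/(d-1)}$ to bound how fast $A$ can grow away from $t=0$ over an interval of length $\Lambda$ relative to the full extent $H_{\convex^+}$. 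This is exactly where the factor $(1-\lambda)^d$ should come from: the displacement $\Lambda$ is engineered (via its defining formula) to be a $\lambda$-fraction of the relevant length scale, so the removed slab costs at most a $(1-\lambda)^d$ multiplicative loss off the bound $e^{-1}\vol(\convex)$ obtained in the centered case.

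The bookkeeping that produces the stated formula $\Lambda=\lambda\,\Theta_d\,\vol(\convex)H_{\convex^+}/(R^{d}H_{\convex^-})$ is the part requiring care: one has to combine (i) the lower bound $A(0)\gtrsim \vol(\convex)/|H_{\convex^-}|$ coming from the cone comparison on the negative side (this is why $H_{\convex^-}$ appears in the denominator), (ii) the identification $A(0)=\Theta_d R^{d-1}$ relating $A(0)$ to $R$, and (iii) the requirement that $\Lambda\le \lambda\,H_{\convex^+}$ so that the slab we carve off sits well inside $\convex^+$ and the concavity bound $(1-\lambda)^d$ is available. Solving these constraints simultaneously for the admissible range of $\Lambda$ yields the displayed expression, with $\lambda$ surviving as the free parameter.

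The main obstacle I anticipate is making the slab estimate tight enough in the correct direction. It is easy to bound the \emph{volume removed} by moving the hyperplane, but we need the surviving piece $\vol(\convex^+)$ after the move to still be at least $e^{-1}(1-\lambda)^d\vol(\convex)$, and a naive bound loses a dimension-dependent constant rather than the clean $e^{-1}$. The trick is to apply Theorem~\ref{th:magic} (the centered case) \emph{first}, at the hyperplane through $\cg(\convex)$ parallel to $h_x$, to get the $e^{-1}$ factor for free, and only then to account for the shift to $h_x$ itself as a controlled multiplicative perturbation; getting the direction of the shift right (so that the perturbation is genuinely bounded by $(1-\lambda)^d$ and not, say, by something that blows up when $\convex^+$ is thin) is the delicate point, and it is precisely the hypothesis $\cg(\convex)+\Lambda x\in\convex^+$ that guarantees we have not shifted past the body.
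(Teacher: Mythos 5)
Your setup (the slice function $A(t)$ and Brunn--Minkowski concavity of $A^{1/(d-1)}$) is a legitimate analytic substitute for the symmetrization the paper uses, and your suspicion about the exponent is right: the detailed statement in the appendix has $R^{d-1}$, not $R^{d}$. The gap is in the perturbation step. Applying Theorem~\ref{th:magic} at the hyperplane through $\cg(\convex)$ and then treating the move to $h_x$ as the removal of a slab of width at most $\Lambda$ gives only an \emph{additive} estimate, $\vol(\convex^+)\ge e^{-1}\vol(\convex)-\Lambda\cdot\sup_t A(t)$, and no bookkeeping turns this into the \emph{multiplicative} bound $e^{-1}(1-\lambda)^{d}\vol(\convex)$ outside the regime of very small $\lambda$: the identity $1-(1-\lambda)^d\approx\lambda d$ holds only to first order. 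Concretely, let $\convex$ be a cone of revolution with apex in the direction $x$ and take the cut at distance $\Lambda$ above the centroid with $\lambda$ of order $1/d$ or larger; the theorem (and the true ratio, which is roughly $(d/(d+1))^d(1-\lambda)^d$) still gives a positive bound, whereas $e^{-1}\vol(\convex)-\Lambda\max A$ is already negative, because the rectangle bound $\Lambda\cdot\max A$ ignores the tapering of the body over the slab. So the step you flag as ``where the factor $(1-\lambda)^d$ should come from'' is exactly where the argument breaks. In addition, your auxiliary inequality $A(0)\gtrsim\vol(\convex)/|H_{\convex^-}|$ is not true for general cuts and is not established for this one, and your item (iii) --- that the three constraints can be solved simultaneously and yield precisely the displayed $\Lambda$ --- is asserted rather than proved; that is the actual content of the theorem.

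For comparison, the paper's proof never subtracts a slab. After symmetrizing, it builds the cone $C$ whose base is the cut section $\convex\cap h_x$, which lies on the positive side and matches the volumes of both parts ($\vol(C^{\pm})=\vol(\convex^{\pm})$); Gr\"unbaum's convexity argument shows the cone's centroid is displaced toward the apex relative to $\cg(\convex)$. The bound is then obtained by computing \emph{exactly} the volume of the cone beyond a plane shifted by a $\lambda$-fraction of the distance from $\cg(C)$ to the apex, which equals $\vol(C)(1-\lambda)^{d}\bigl(1-\tfrac{1}{d+1}\bigr)^{d}\ge e^{-1}(1-\lambda)^{d}\vol(\convex)$ --- the factor $(1-\lambda)^{d}$ is produced by the cone geometry itself --- and the formula for $\Lambda$ falls out of controlling the cone's unknown height and base radius via $H\le(d+1)H_{\convex^-}$ (the centroid of a cone sits at $1/(d+1)$ of its height), $H_{\convex^+}\le H_{C^+}$, and similar triangles relating the cone's base radius to $R$. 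To salvage your route you would have to replace the rectangle bound on the slab by an integral comparison against such a volume-matched cone, at which point you have reproduced the paper's argument rather than found an alternative to it.
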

\begin{proof}
The proof is a (non-trivial) extension of Grunbaum's one for
Theorem~\ref{th:magic} \cite{grunbaum60}. Due to space restriction, we
cannot expose it here in full and refer the interested reader to
\emph{\url{http://pageperso.lif.univ-mrs.fr/~ugo.louche/paper/activeCPSuppl.pdf}}
\end{proof}

Theorem \ref{th:generalized} extends Theorem~\ref{th:magic} to the
situation when an approximation of the center of gravity is considered; 
it reduces to Theorem~\ref{th:magic} when applied to the very center of
gravity. This is to the best of our knowledge
the first result of this kind and this is a result that is of
its own interest, wich may benefit to many fields of computer science. 
Here, the purpose of Theorem~\ref{th:generalized} is essentially to validate
the use of approximations of the center of gravity $\cg(\convex)$ in the procedures at hand, which is
inevitable due to the complexity of exactly finding this point. We will more
precisely use it in two occasions: a) for center-of-gravity-based compression
scheme methods and b) in the active learning setting (see below).

\subsection{Active Learning with Cutting Planes} 
An interesting situation of learning is that of active learning
when the algorithm is presented with unlabelled data and
it has to query for the labels of the training points that carry
the most information to build a relevant decision boundary. Given 
a volume $\convex$ inside which a good classifier $w^*$ for the classification
task at hand is known to lie, the amount of information carried 
 by a labeled training point $(x,y)$ (where $y$ has been queried) might be 
 for instance measured by how $(x,y)$ can be used to identify within
 $\convex$ an (hopefully small) volume $\convex'\subseteq\convex$ where $w^*$ lives. Termed otherwise,
 the amount of information provided by $(x,y)$ might be measured as the volume 
 reduction induced by the knowledge of $(x,y)$: this is exactly the
 type of information cutting-plane methods build upon. We take advantage
 of this philosophy shared by active learning methods and cutting-plane algorithms
 to argue it is easy to transform a cutting-plane algorithm into an active
 learning method. Based on the idea of maximum volume reduction, the question
 to address is simply that of identifying a training pattern $x$ in $\trainingset$ 
 such that, independently of the label it might receive, is guaranteed to
 define a cutting hyperplane of equation $\langle x,w\rangle=0$ that intersects
 the current convex $\convex$ in a controlled way. To do so, a typical good query point
 is one that is as close as possible to the `center' of $\convex$, where center 
 may have the few meanings discussed above (cf. center of gravity, Chebyshev's
 center). The algorithm given in Table~\ref{tab:ActiveCP} is a generic active
 learning algorithm that is based on the classical cutting-plane approach.
\begin{algorithm}[t]
\begin{algorithmic}[1]
\State $\convex^0\leftarrow\ball$
\State $t\leftarrow 0$
\Repeat
	\State $w^t\leftarrow\text{center}(\convex^t)$
	\State $x_{n_t},y_{n_t} \leftarrow$ \Call{Query}{$\convex^t, \trainingset$} 
	\If {$y_{n_t}\dotprod{w^t}{x_{n_t}} < 0$}					
		\State $\convex^{t+1}\leftarrow\convex^{t}
		 \inter\left\{z:y_{n_t}\dotprod{z}{x_{n_t}}
		 \geq 0 \right\}$
		\State $t\leftarrow t+1$
	\EndIf
\Until{$\convex^t$ is small enough}
\State \Return $w^t$
\\
\Function{Query}{$\convex, \trainingset$}
 \State Sample $M$ points $s_1, \ldots s_M$ from $\convex$ 
\State ${\bf g}\leftarrow {\sum_{k=1}^M s_k}/{M}$
\State $x \leftarrow \arg\min_{x_i \in \trainingset} \dotprod{{\bf g}}{x_i}$
\State $y\leftarrow$ get label from an expert
\State \Return $x,y$
\EndFunction
\end{algorithmic}
\caption{Top: a generic cutting-plane active learning procedure; $w^t$ is computed as the `center' of $\convex^t$ ---center my refer to the center of gravity of the Chebyshev center. Bottom: a possible
implementation of {\sc Query()}: sampling strategies are given in, e.g.,
 \cite{Herbrich01,Lovasz2006,Kannan2012}.\label{tab:ActiveCP}}
\end{algorithm}

Making active learning algorithms from cutting-plane methods
is a route that has been taken by \cite{Tong02}, even though
the connection with cutting-plane algorithms was not clearly identified.

Being able to approximate the center of gravity of a convex polytope is
pivotal for the design of active learning strategies.
It is interesting to note that in the recent years, methods have been devised to uniformly sample
from the version space such as the \emph{Hit-and-Run} algorithm
of \cite{Lovasz2006} or a billiard algorithm of~\cite{Rujan97}. More
recently, the \emph{Dikin Walk} algorithm of~\cite{Kannan2012} provided a  strongly
polynomial algorithm for approximate uniform sampling over the version space
while the \emph{Expectation Propagation} method of \cite{Minka13} gave
a Bayesian interpretation of billiard algorithms.
Notably, these methods have been successfully used with cutting planes for
active Boosted Learning \cite{Trapeznikov2011}. Another practical approach
we should mention is the one proposed in \cite{Herbrich01} that
consists in repeatedly running a Perceptron over a permutation of the training
set:
in the active learning setting, the number of labeled points available is just
too low to produce interesting approximation of the center of gravity with
this method. 

A by-product of our active learning procedure is that we now solve a Bayes Point Machine (BPM)
problem \cite{Herbrich01} at each step $t$ by finding the center of gravity of
the current convex body $\convex^t$.
Therefore, we can turn our active learning procedure into a full active learning
algorithm---that we dub \texttt{Active-BPM}---for free by using the center of
gravity for classification.
Note that this is one of many possible instantiations of our procedure, which is nonetheless of
interest as it is the BPM-counterpart the \texttt{Active-SVM }algorithm of Tong
and Koller~\cite{Tong02}.

In conclusion,
Theorem~\ref{th:generalized} provides a general guideline to systematically
query the training point that comes with the best volume reduction guarantees.
This is a theoretically sound and viable strategy for active learning
that comes with a theoretical bound on the induced volume reduction, the lack of which was an essential limit 
of the Chebyshev's center-based method of \cite{Tong02}.


\section{Numerical Simulations}
\label{sec:simulations}

Here, we present some empirical simulations based on the algorithms described
throughout this paper in both passive and active learning settings.

\subsection{Synthetic Data and Perceptron-based Localization Algorithm}
We generate a toy dataset of $1,000$ $2$-dimensional datapoints. 
Each point is uniformly drawn on a $20$-by-$20$ square centered at the origin. 
We label this dataset according to a classifier $w^*$ uniformly drawn over 
the unit circle. In order to have only positive labels, negative examples 
are reflected through the origin. We then enforce a minimal margin 
$\gamma$ by pruning examples $x_i$ for which $\dotprod{w^*}{x_i} < \gamma$.
This last modification allows us to have some control over the size of the
version space $\versionspace$. The downside of this is that we no longer 
have exactly $1,000$ datapoints (though during our experiments we noted 
that the size of the dataset stays mostly the same for reasonable margin values).

For these experiments, we use the Perceptron-based Localization algorithm 
(Algorithm \ref{tab:perceptron_cp}). We implement it with three different 
oracle strategies for selecting cutting planes. The first strategy 
(which we call \emph{Largest Error}) picks the cutting plane with the 
lowest margin. The second one (\emph{Smallest Error}) picks the cutting
plane with the highest negative margin, that is to say points that
are incorrectly classified but close to the decision boundary.
Finally, the third one (\emph{Random Error}) simply picks a cutting plane with negative margin
at random. It should also be noted that our instantiation of the Perceptron
algorithm picks the update vector that
realizes the lowest margin for its internal update---line (\ref{step:percUpdate}) of {\sc Perceptron}() in Algorithm
\ref{tab:perceptron_cp}.
This is mostly an arbitrary choice and we only mention it for the sake of repoducibility.

The first experiment consists in a single run over a dataset of margin $\gamma
= 0.1$. We monitor both the number of cutting planes generated and the
number of internal Perceptron updates for each cutting plane. The presented
results are averaged over $1,000$ runs.

\begin{figure*}[t]
\centering
\begin{subfigure}{0.38\textwidth}
\centering
\includegraphics[width=1\linewidth]{./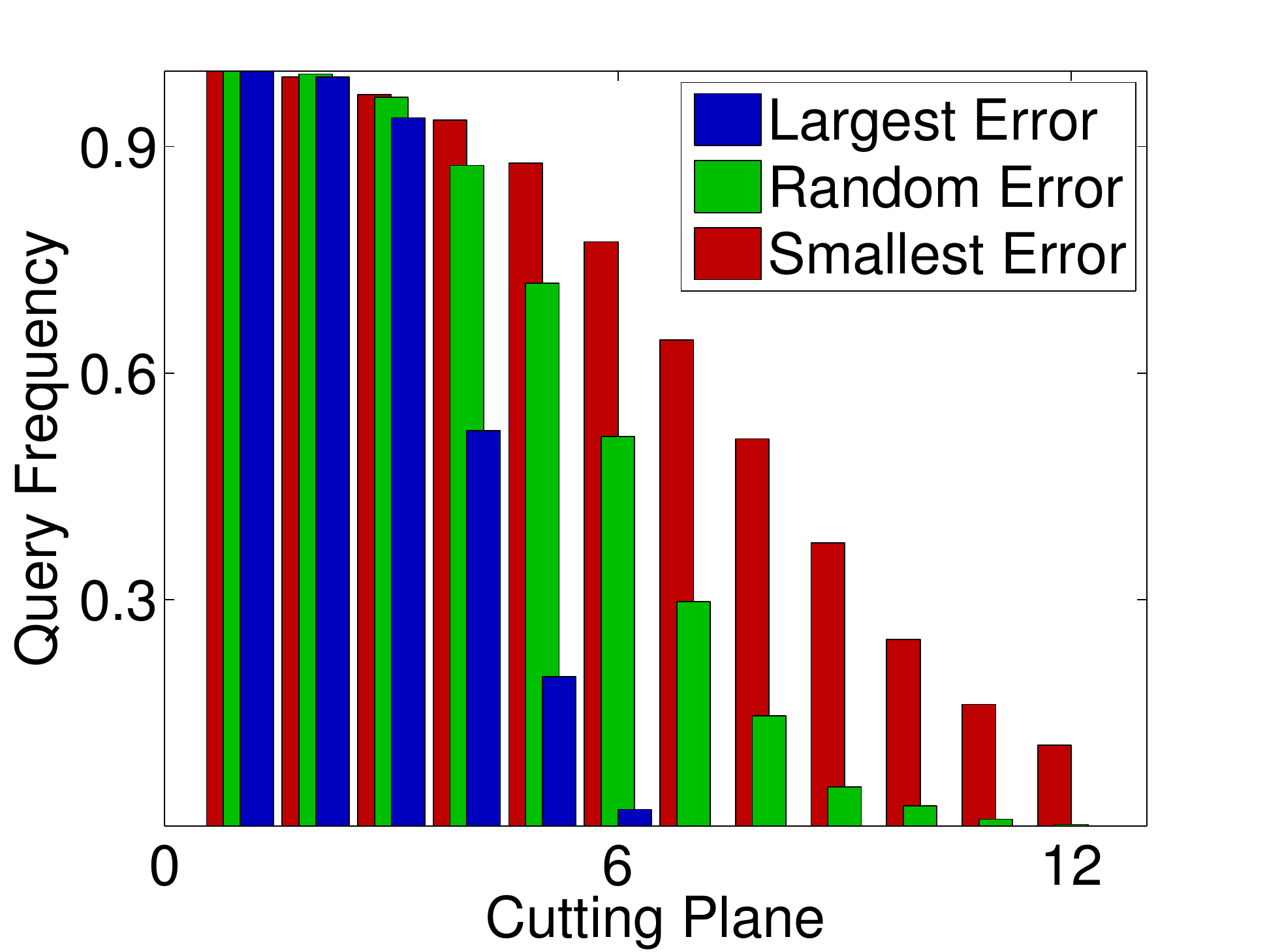}
\end{subfigure}%
\hfill
\begin{subfigure}{0.38\textwidth}
\centering
\includegraphics[width=1\linewidth]{./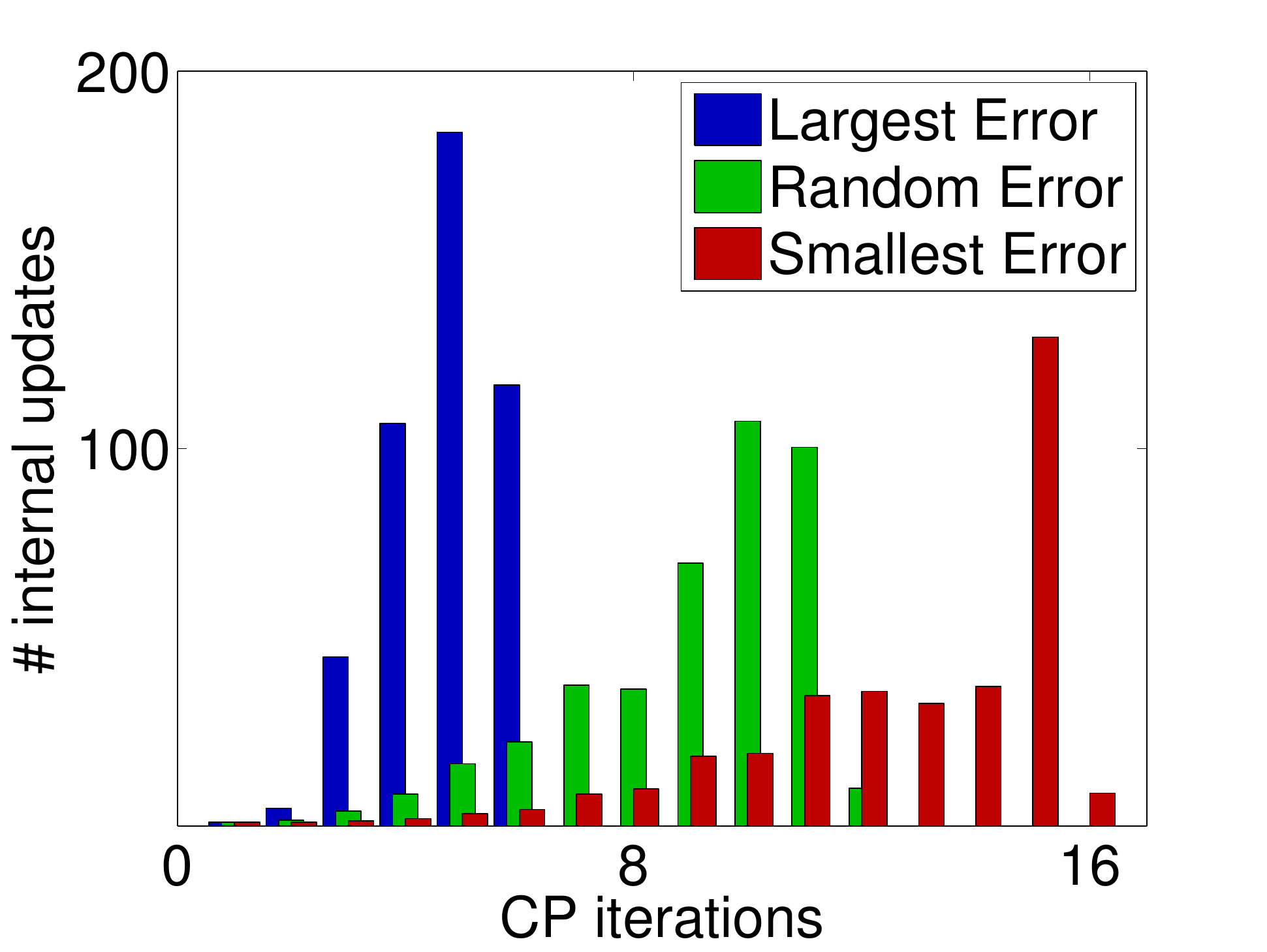}
\end{subfigure}
\caption{\emph{Left} : for each value $i$ the bar represents the empirical probability 
(over $1,000$ runs) to query at least $i$ cutting planes.
\emph{Right}: each bar represents the number of internal Perceptron updates
computed after each Cutting Plane loop.} \label{fig:Perceptron}
\end{figure*}

The left pane of Figure~\ref{fig:Perceptron} supports the soundness of our approach in the case of
a compression scheme with no more than $6$ cutting planes for the best strategy (Largest Error).
Additionally, we can observe a sharp decrease after the third cutting plane
with this strategy and $80\%$ of the time, only $4$ cutting planes are
required to model the dataset. 
In contrast, the right-hand side of Figure~\ref{fig:Perceptron} reveals a trade-off between the number of
cutting planes used and the number of internal updates for each cutting plane. We observe a smooth shift across our three strategies with
Smallest Error putting the emphasis on small number of internal updates.
In all respect, the Random Error strategy acts as a middle ground between
the two other extreme approaches.

For the second experiment the margin (i.e. the volume of $\versionspace$) is
variable with values between $0.01$ and $0.3$. We also monitor the total number
of internal updates rather than the \emph{per cutting plane} value for the
three strategies and a regular Perceptron Algorithm
\footnote{More precisely, we use the exact
same Perceptron than the one used for the internal loop but ran on the full
dataset}.
Remind that this value is bounded from Proposition \ref{prop:ActiveNovikoff}. This bound
also holds for the regular Perceptron.

\begin{figure*}[t]
\begin{subfigure}{0.38\textwidth}
\centering
\includegraphics[width=1\linewidth]{./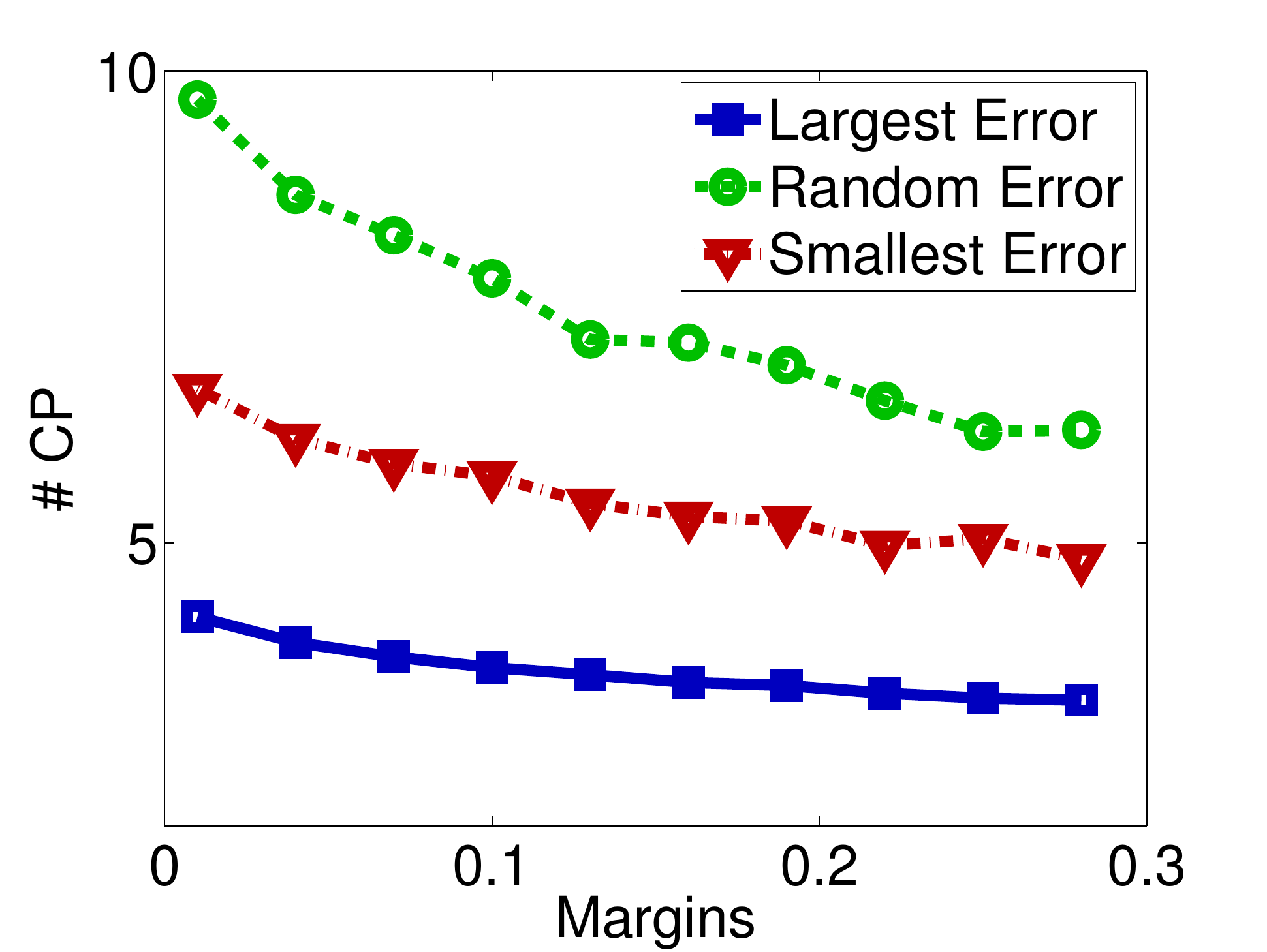}
\end{subfigure}%
\hfill
\begin{subfigure}{0.38\textwidth}
\centering
\includegraphics[width=1\linewidth]{./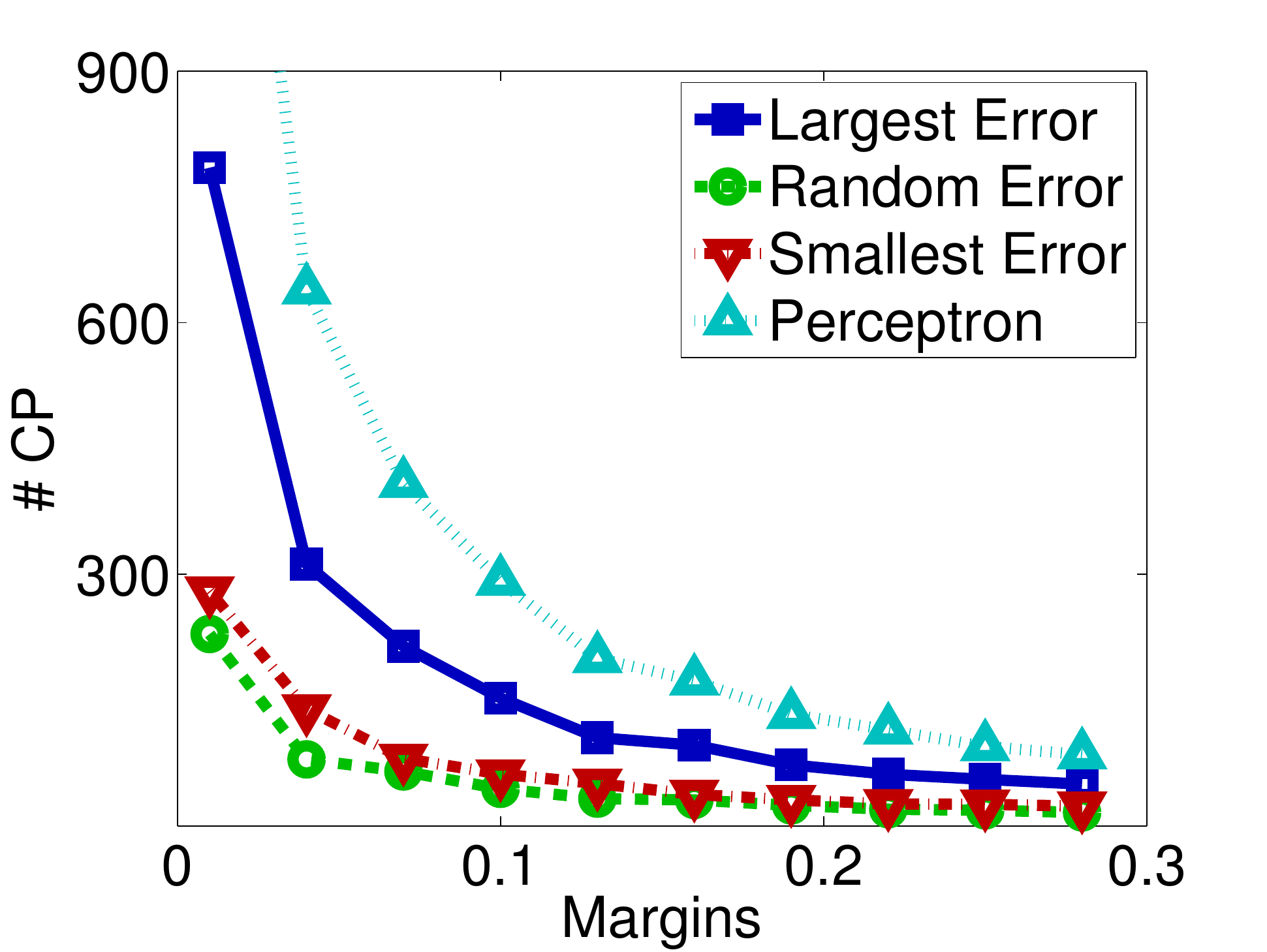}
\end{subfigure}
\caption{\emph{Left}: The average number of cutting planes used for each strategy with
respect to the value of $\gamma$. 
\emph{Right}: the total number of internal updates with respect to $\gamma$.
The fourth plot corresponds to a regular Perceptron 
} \label{fig:PerceptronMargin}
\end{figure*}

The previously observed behavioral shift across the three strategies is
confirmed by Figure~\ref{fig:PerceptronMargin}. Additionally, some relative
robustness is observed with respect to $\gamma$, especially when \changeok{the
number of cutting planes is low}{the emphasis is put on querying a small number
of cutting planes}.
It is interesting to note that the Random Strategy makes nearly as few updates
as Smallest error while still querying a---relatively---low number of cutting planes. Finally, all three strategies are making slightly less updates
than the regular Perceptron. To conclude, note that the theoretical bound of
Proposition~\ref{prop:ActiveNovikoff} is far too big to be plotted on
the plot on the left of Figure~\ref{fig:Perceptron}.

\subsection{Active Learning on Real Data}

We illustrate our method for active learning on text classification data.
For easy comparison, we follow an experimental procedure similar as the one in
\cite{Tong02}. Namely, we use the
\emph{Reuters-21578} ---\emph{ModApte} variation--- and
Newsgroups datasets\footnote{Available at http://www.cad.zju.edu.cn/
home/dengcai/Data/TextData.html}. The Reuters
dataset is composed of $8,293$ documents represented in TF-IDF form for
$18,933$ words. The dataset spans $65$ topics such as \emph{Earn},
\emph{Coffee} or \emph{Cocoa} and is split in $5,946$ training examples and
$2,347$ test examples. On the other hand, the Newsgroups dataset
accounts for $18,846$ documents of $26,214$ features splitted in $20$ topics.
Half of this dataset is uniformly picked for training while the rest is kept for
testing purposes. On
both datasets we train a ``one-versus-all'' classifier for each class.
We start by creating a pool of unlabeled training examples sampled from the
training set. Then we run Algorithm \ref{tab:ActiveCP}.
We use two variations of the {\sc Query()} function: one based on the
Chebyshev center (note that this is equivalent to the \activesvm of
\cite{Tong02}), and the other based on an approximation of the center of gravity from Minka's
Expectation Propagation method \cite{Minka13}.
This last approach corresponds to the \activebpm algorithm and has, to the best
of our knowledge, never been used before. It is a direct application of Active
Learning algorithms with Cutting planes method to the Bayes Point Machine. \changeok{However, keep in mind that
for computational reason we use a rough approximation of the Center of Gravity.}{}
For both methods, we use two pools of different sizes ($500$ and $1,000$ examples).
For initialization reasons, each pool comes with two already labeled vectors.\footnote{SVM and CC are computed with libSVM:
http://www.csie.ntu.edu.tw/~cjlin/libsvm/. BPM and CG are computed from Minka's
own implementation of EP for BPM in matlab:
http://research.microsoft.com/en-us/um/people/minka/papers/ep/bpm/}
All the computations are done with a linear kernel and the presented results are class-wise accuracy
measurements on the test examples over the $10$ most represented classes. The
values reported here are an average of these measures over $25$ runs.
We complement these two datasets with Gunnar Raetsch's
Banana dataset. The Banana dataset is a widely used bataset of $2$-dimensionnal
points split into two classes from which we extract $400$ training and $4900$
test examples.
Due to its small size, the whole training set is used for the pool of
unlabeled example. The computations are realized with an RBF kernel of parameter
$\sigma = 0.5$ and presented results are averaged over $50$ runs.

\begin{figure*}
\centering
\begin{subfigure}{0.33\textwidth}
\centering
\includegraphics[width=1\linewidth]{./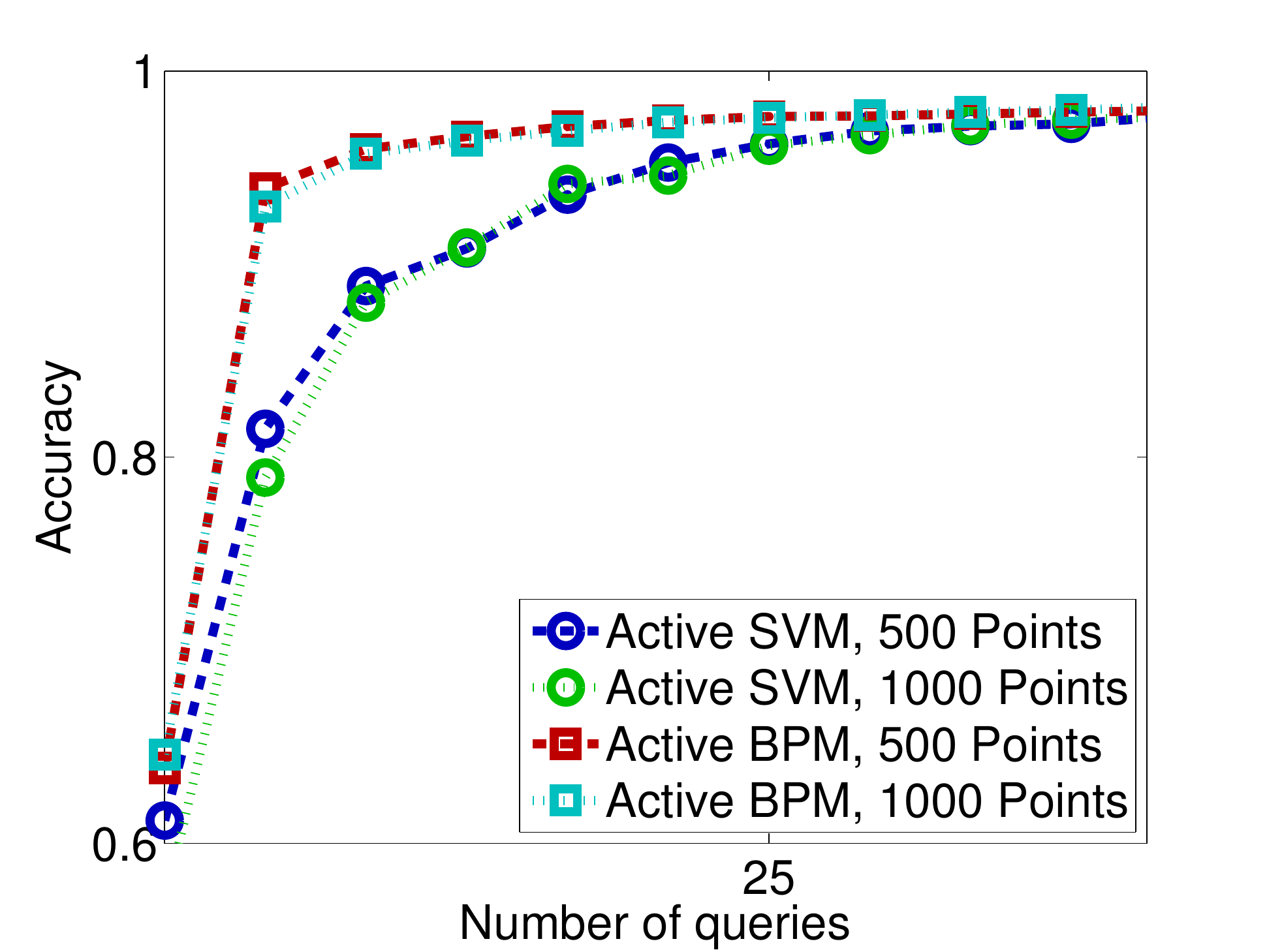}
\end{subfigure}%
\hfill
\begin{subfigure}{0.33\textwidth}
\centering
\includegraphics[width=1\linewidth]{./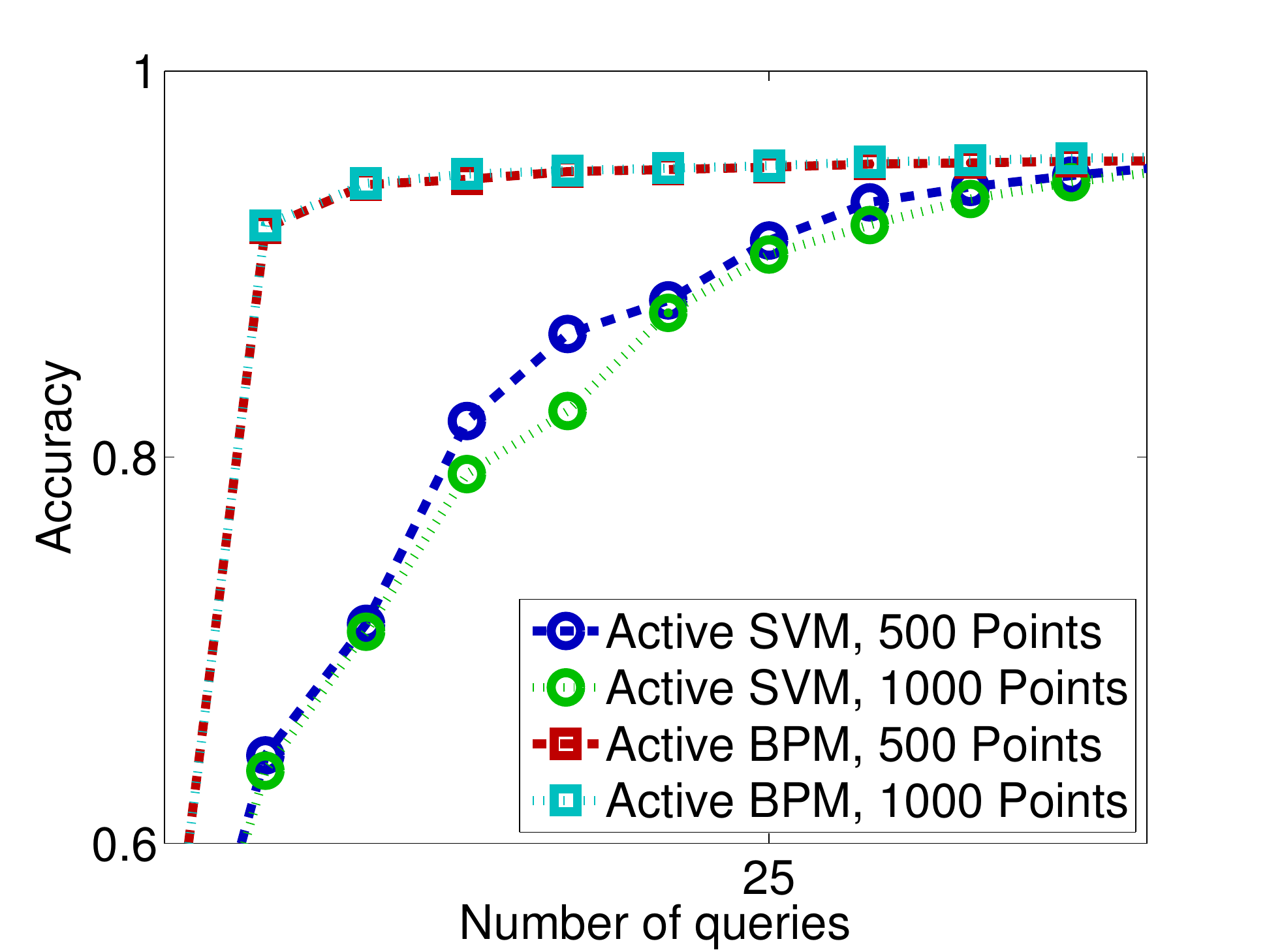}
\end{subfigure}%
\begin{subfigure}{0.33\textwidth}
\centering
\includegraphics[width=1\linewidth]{./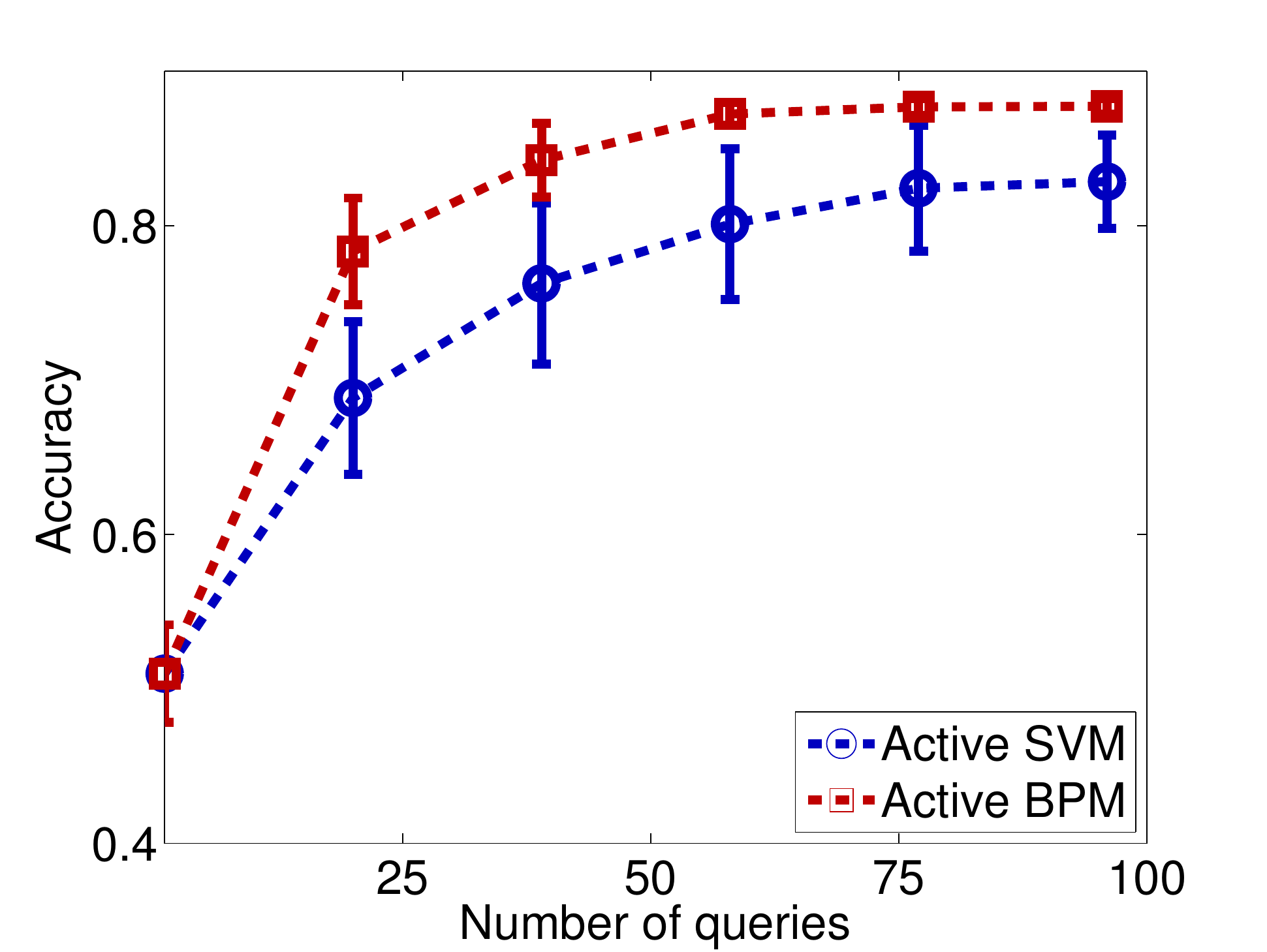}
\end{subfigure}%
\caption{Accuracy  on the Reuters (left) and Newsgroups (middle) datasets
for \activesvm and \activebpm for pools of $500$ and $1000$ examples. Left:
accuracy  with error bars on the Banana dataset (Gunnar Raetsch) for \activesvm and \activebpm.}
\label{fig:result}
\end{figure*}

Figure \ref{fig:result} graphically depicts the behavior of the so-called
\activesvm \cite{Tong02} and the \activebpm algorithms on each
dataset.
Namely, in both algorithms, the queries are selected according to their distance to the
``centroid'' of $\convex$, which, in
turn, serves as classifier.
The difference between these two algorithms lies in that \activesvm uses the
Chebyshev center and \activebpm the center of gravity for centroid.
In Figure~\ref{fig:result}, data are represented by
circles of squares whether they correspond to results achieved by \activesvm or \activebpm.
Additionally, for the Reuters and Newgroups datasets, dashed plots
correspond to the pool of $500$ examples while dotted plots relate to the pool
of $1000$ examples. The error bounds on the third plot (Banana)
correspond to the usual standard deviation. Each
plot represents the accuracy of those algorithms with respect to the number of
queries made. We can see that \activebpm systematically outperforms \activesvm and increases
 its accuracy faster for all datasets, already
attaining an accuracy of $0.9$ after roughly $10$ queries for both
Reuters and Newsgroups datasets.
Both algorithm seem to stabilize after $30$ queries, with the \activebpm being
slightly more accurate than its SVM counterpart.
For the Banana dataset, the accuracy increase in the first queries
is a lot smoother, with an accuracy for \activebpm of roughly $0.8$ after
$20$ queries. Both algorithms seem to have converged after $60$ queries.
Comparatively, not only does \activebpm clearly dominate its SVM counterpart but it
is also more stable as evidenced by the error bars which become negligible
past the $60^{\text{th}}$ query. \changeok{Figure \ref{fig:Dist} shows the
distance of the $i^{\text{th}}$ query to the relevant centroid for each
algorithm. Both algorithms query the closest point to the relevant centroid,
and from Theorem \ref{th:magic} the distance of the query should be related to
the increase in accuracy (at least for the Active BPM). From the plots, it is
clear that the Active BPM algorithm behave as expected, with the distance
increasing as the accuracy stabilizes. Also, using the $1000$ examples
intuitively leads to shorter distances. Although, this does not seems to be the
case for Active SVM and as far as Figure \ref{fig:Dist} is concerned there is
no clear relation between the distance of a query and the corresponding
increase in accuracy.}{}

\section{Conclusion and Future Directions}
\label{sec:conclusion}
In this paper, we have shown that deep connections exist between
Localization methods and Learning algorithms. Both fields have extensively
characterized and studied similar concepts over the past years, sometime
independently. On the other hand, complementary results have been found in each
community. A notable example is the absence of a kernel approach in the Cutting
Planes literature while center of gravity methods were mostly unknown in machine
learning until Herbrich's BPM \cite{Herbrich01}. We may also
mention that the Cutting planes' equivalent of the famous SVM \cite{Vapnik95}
appears as soon as the $70$'s in \cite{Elzinga75}.
This work is a testimony on how it is possible to derive new learning
algorithms, both efficient and theoretically funded, by reformulating Cutting
Planes approach for the learning paradigm.
Besides the cutting plane-related flavor of the present work, it should be
restated that Theorem~\ref{th:generalized} has a value that 
goes beyond the scope of this paper. A field that may be impacted by this
result is obviously that of computational geometry where most of the results
about the computation of centers of gravity come from; 
nonetheless,
it should be noted that more closely related works could also benefit from our result.
For instance, if we consider the
active learning methods whose query steps rely
on explicit exploration of all the possible query/label combinations (see, e.g.
\cite{Roy01}), then Theorem~\ref{th:generalized} provides a tool to devise natural and
theoretically sound heuristics to effectively locate the most informative query
points, or, in other words, those that may lead to the smallest expected error.

 Among all the possible extensions of
this work, one we are particularly interested in is to study how these results
may carry over to the multiclass setting and provide proper multiclass active
algorithms based on, for example, Crammer's Ultraconservative Additive
Algorithms \cite{Crammer2003}.

\begin{small}
\bibliographystyle{./IEEEtran}
\bibliography{activecp}

\begin{thebibliography}{10}
\providecommand{\url}[1]{#1}
\csname url@samestyle\endcsname
\providecommand{\newblock}{\relax}
\providecommand{\bibinfo}[2]{#2}
\providecommand{\BIBentrySTDinterwordspacing}{\spaceskip=0pt\relax}
\providecommand{\BIBentryALTinterwordstretchfactor}{4}
\providecommand{\BIBentryALTinterwordspacing}{\spaceskip=\fontdimen2\font plus
\BIBentryALTinterwordstretchfactor\fontdimen3\font minus
  \fontdimen4\font\relax}
\providecommand{\BIBforeignlanguage}[2]{{%
\expandafter\ifx\csname l@#1\endcsname\relax
\typeout{** WARNING: IEEEtran.bst: No hyphenation pattern has been}%
\typeout{** loaded for the language `#1'. Using the pattern for}%
\typeout{** the default language instead.}%
\else
\language=\csname l@#1\endcsname
\fi
#2}}
\providecommand{\BIBdecl}{\relax}
\BIBdecl

\bibitem{Tong02}
S.~Tong and D.~Koller, ``Svm active learning with applications to text
  classification,'' \emph{JMLR}, 2002.

\bibitem{Herbrich01}
T.~G. R.~Herbrich and C.~Campbell, ``Bayes point machines,'' \emph{JMLR}, 2001.

\bibitem{Freund99}
Y.~Freund and R.~E. Schapire, ``Large margin classification using the
  perceptron algorithm,'' \emph{Machine learning}, 1999.

\bibitem{goffin80MOR}
J.-L. Goffin, ``The relaxation method for solving systems of linear
  inequalities,'' \emph{Mathematical Operations Research}, 1980.

\bibitem{Motzkin:1954:RML}
T.~S. Motzkin and I.~J. Schoenberg, ``The relaxation method for linear
  inequalities,'' \emph{Canadian Journal of Mathematics}, 1954.

\bibitem{Block62}
H.~Block, ``{The perceptron: a model for brain functioning},'' \emph{Reviews of
  Modern Physics}, 1962.

\bibitem{Novikoff62}
A.~Novikoff, ``On convergence proofs on perceptrons,'' in \emph{Proceedings of
  the Symposium on the Mathematical Theory of Automata}, 1962.

\bibitem{Rosenblatt58}
F.~Rosenblatt, ``The perceptron: {A} probabilistic model for information
  storage and organization in the brain,'' \emph{Psychological Review}, 1958.

\bibitem{FloydW95MLJ}
S.~Floyd and M.~Warmuth, ``Sample compression, learnability, and the
  vapnik-chervonenkis dimension,'' \emph{Machine Learning}, 1995.

\bibitem{JoachimsFY09MLJ}
T.~Joachims, T.~Finley, and C.-N. Yu, ``Cutting-plane training of structural
  svms,'' \emph{Machine Learning}, 2009.

\bibitem{FrancS09JMLR}
V.~Franc and S.~Sonnenburg, ``Optimized cutting plane algorithm for large-scale
  risk minimization,'' \emph{JMLR}, 2009.

\bibitem{TeoVSL10JMLR}
C.~H. Teo, S.~V.~N. Vishwanathan, A.~J. Smola, and Q.~V. Le, ``Bundle methods
  for regularized risk minimization.'' \emph{JMLR}, 2010.

\bibitem{Settles12Active}
B.~Settles, ``Active learning,'' in \emph{Synthesis Lectures on Artificial
  Intelligence and Machine Learning}, 2012.

\bibitem{golovin10}
D.~Golovin and A.~Krause, ``Adaptive submodularity: {A} new approach to active
  learning and stochastic optimization,'' \emph{CoRR}, 2010.

\bibitem{gonen13}
A.~Gonen, S.~Sabato, and S.~Shalev-Shwartz, ``Efficient active learning of
  halfspaces: an aggressive approach,'' \emph{JMLR}, 2013.

\bibitem{balcan07}
M.~F. Balcan, A.~Broder, and T.~Zhang, ``Margin based active learning,'' in
  \emph{COLT}, 2007.

\bibitem{lewis1994}
D.~D. Lewis and W.~A. Gale, ``A sequential algorithm for training text
  classifiers,'' in \emph{Proceedings of the 17th annual international ACM
  SIGIR conference on Research and development in information retrieval}.\hskip
  1em plus 0.5em minus 0.4em\relax Springer-Verlag New York, Inc., 1994, pp.
  3--12.

\bibitem{Kelley60SIAM}
J.~E. Kelley, ``The cutting plane method for solving convex programs,''
  \emph{SIAM}, 1960.

\bibitem{Nesterov95}
Y.~Nesterov, ``Cutting plane algorithms from analytic centers: efficiency
  estimates,'' \emph{Mathematical Programming}, 1995.

\bibitem{DabbeneSP10SIAM}
F.~Dabbene, P.~S. Shcherbakov, and B.~T. Polyak, ``A randomized cutting plane
  method with probabilistic geometric convergence.'' \emph{SIAM Journal on
  Optimization}, 2010.

\bibitem{graepel05pacbayesian}
T.~Graepel, R.~Herbrich, and J.~Shawe-Taylor, ``{PAC-Bayesian Compression
  Bounds on the Prediction Error of Learning Algorithms for Classification},''
  \emph{Machine Learning}, 2005.

\bibitem{Li2002}
Y.~Li, H.~Zaragoza, R.~Herbrich, J.~Shawe-Taylor, and J.~Kandola, ``The
  perceptron algorithm with uneven margins,'' 2002.

\bibitem{Crammer2006}
K.~Crammer, O.~Dekel, J.~Keshet, S.~Shalev-Shwartz, and Y.~Singer, ``Online
  passive-aggressive algorithms,'' \emph{JMLR}, 2006.

\bibitem{grunbaum60}
B.~Grunbaum, ``Partitions of mass-distributions and of convex bodies by
  hyperplanes,'' \emph{Pacific Journal of Mathematics}, 1960.

\bibitem{Newman65}
D.~J. Newman, ``Location of the maximum on unimodal surfaces,'' \emph{J. ACM},
  1965.

\bibitem{Levin65}
A.~Levin, ``On an algorithm for the minimization of convex functions,''
  \emph{Soviet Math. Doklady}, 1965.

\bibitem{Boyd08}
S.~Boyd and L.~Vandenberghe, ``Localization and cutting-plane methods,'' 2008.

\bibitem{Elzinga75}
J.~Elzinga and T.~G. Moore, ``A central cutting plane algorithm for the convex
  programming problem,'' \emph{Mathematical Programming}, 1975.

\bibitem{Boyd04}
S.~P. Boyd and L.~Vandenberghe, \emph{Convex optimization}, 2004.

\bibitem{Vapnik95}
V.~N. Vapnik, \emph{The Nature of Statistical Learning Theory}, 1995.

\bibitem{Lovasz2006}
L.~Lov{\'a}sz and S.~Vempala, ``Hit-and-run from a corner,'' \emph{SIAM Journal
  on Computing}, 2006.

\bibitem{Kannan2012}
R.~Kannan and H.~Narayanan, ``Random walks on polytopes and an affine interior
  point method for linear programming,'' \emph{Mathematics of Operations
  Research}, 2012.

\bibitem{Rujan97}
P.~Ruj{\'a}n, ``Playing billiards in version space,'' \emph{Neural
  Computation}, 1997.

\bibitem{Minka13}
T.~P. Minka, ``Expectation propagation for approximate bayesian inference,''
  \emph{CoRR}, 2013.

\bibitem{Trapeznikov2011}
K.~Trapeznikov, V.~Saligrama, D.~A. Casta{\~n}on, and A.~David, ``Active
  boosted learning (actboost),'' in \emph{AISTATS}, 2011.

\bibitem{Roy01}
N.~Roy and A.~McCallum, ``Toward optimal active learning through sampling
  estimation of error reduction,'' in \emph{ICML}.\hskip 1em plus 0.5em minus
  0.4em\relax Morgan Kaufmann Publishers Inc., 2001, pp. 441--448.

\bibitem{Crammer2003}
K.~Crammer and Y.~Singer, ``Ultraconservative online algorithms for multiclass
  problems,'' \emph{JMLR}, 2003.

\end{thebibliography}
\end{small}


\appendix
This appendix is composed of three sections. Section \ref{sec:prelim} serves as
reminder of basic notions and results for the proofs of the other sections,
additionally, we will introduce our set of notation thorough this section.
Section \ref{sec:grunbaum} consists in a rewriting of the proof of Grunbaum in
\cite{grunbaum60} on the partition of convex bodies by hyperplanes. The proof is
restated in full with proper notation as it is the starting point of our result.
The last section gives the proof of theorem \ref{th:main} which is an extended
version of the result of Grunbaum and is one of the contribution of our paper.

\section{Preliminaries} \label{sec:prelim}

\subsection{Hyper-Sphere and Hyper-Ball}

\begin{definition}[$n$-dimensional Sphere]
We call $n$-sphere of center $O \in \realset^n$ and radius $R \in
\realset$ and write $\Sphere(O, R) \subset \realset^n$ the subset
\[
	\Sphere(O, R) \doteq \lbrace x \in \realset^n : \norm{x - O} = R \rbrace
\]
\end{definition}

\begin{definition}[$n$-dimensional Ball] \label{def:Ball}
We call $n$-ball of center $O \in
\realset^n$ and radius $R \in \realset$ and write $\Ball(O,R) \subset
\realset^n$ the subset
\[
	\Ball(O,R) \doteq \lbrace x \in \realset^n : \norm{x - O} < R \rbrace
\]

Alternatively, one can think of a ball as :
\[
	\Ball(O,R) \doteq \bigcup_{r \in [0, R]} \Sphere(0, r)
\]
\end{definition}

\begin{definition}[Surface of a spehe]
We call \emph{Surface} of the $n$-sphere $\Sphere(O, R)$ and write $\vol(
\Sphere(O, R) )$ the $n-1$ dimensional volume
\[
\vol( \Sphere(O, R) ) \doteq \Cpis_n \times R^{n-1}
\]
Where $\Cpis_n$ is a constant factor depending only on $n$ (e.g $\Cpis_1 = 2$,
$\Cpis_2 = 2 \pi$, $\Cpis_3 = 4 \pi$ and so on \ldots)
\end{definition}

\begin{definition}[Volume of a Ball]
We call \emph{Volume} of the $n$-ball $\Ball(O,R)$ and write $\vol( \Ball(O, R)
)$ the $n$-dimensional volume
\[
\vol( \Ball(O, R) ) \doteq \int_0^R \vol( \Sphere( O, r ) ) dr
\]
That is
\begin{align*}
\vol( \Ball(O, R) ) & = \int_0^R \Cpis_n R^{n-1} dr \\
& = \frac{\Cpis_n R^{n}}{n} \\
& =	\Cpi_n R^{n}
\end{align*}

Where $\Cpi_n \doteq \frac{\Cpis_n}{n}$ is a constant factor depending only on
$n$.
\end{definition}

\subsection{Hyper-Cone}
From these core definitions, we can now introduce (Hyper)-cones and some of
their core properties.
Intuitively, an Hyper-cone of dimension $n+1$, center $O$, radius $R$ and height
$H$ is a sequence of $n$-Ball of linearly decreasing radius between $R$ and $0$,
each one living on a difference ``slice'' of $\realset^{n+1}$ between $O$ and
$0+H$.

\begin{remark}
We will use $\unit_{n+1}$ to denote the vector of $\realset^{n+1}$ with $1$ on
its $n+1$ component and $0$ elsewhere.
\end{remark}

\begin{definition}[Hyper-cone]
We call Hyper-cone of dimension $n+1$, base $\Ball(O, R) \subset \realset^n$ and
height $H$ the set : 
\[
H \doteq \bigcup_{\forall h \in [0,H]} \Ball \left( O + h\unit_{n+1},
\frac{H - h}{H} R \right)
\]
\end{definition}

Alternatively, we can define the \emph{apex} $Z \doteq O + H
\times \unit_{n+1}$ of the hyper-cone and give the following definition :

\begin{definition}[Hyper-cone (2)]
We call Hyper-cone of dimension $n+1$, base $\Ball(O, R) \subset \realset^n$ and
apex $Z$ the convex hull $\conv \left( \lbrace \Ball(O, R);  z \rbrace \right)$.
\end{definition}

We are now ready to state the core properties of Hyper-cone that we will use in
the remaining of this document.

We start with the volume of a Hyper-cone

\begin{definition}[Volume of Hyper-cone]
Given an Hyper-cone $C \in \realset^{n+1}$ of dimension $n+1$, base $\Ball(O,
R) \subset \realset^n$ and height $H$ we call \emph{volume} and write $\vol(C)$
the $n+1$-dimensional volume :
\[
\vol(C) \doteq \int_0^H \vol \left( \Ball \left( O + h \unit_{n+1},
\frac{H - h}{H}R \right) \right) dh
\]
\end{definition}

\begin{proposition} \label{prop:VolCone}
The volume of the Hyper-cone $C \subset \realset^{n+1}$ of dimension $n+1$, base
$\Ball(O, R) \subset \realset^{n}$ and height $H$ is 
\[
\vol(C) = \frac{\Cpi_n R^{n}}{n+1}H
\]
\end{proposition}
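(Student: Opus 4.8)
The plan is to compute the defining integral directly, using the closed form for the volume of an $n$-ball established just above. Starting from
\[
\vol(C) = \int_0^H \vol\!\left( \Ball\!\left( O + h\unit_{n+1}, \tfrac{H-h}{H}R \right) \right) dh,
\]
I would first substitute the formula $\vol(\Ball(O',R')) = \Cpi_n (R')^{n}$, which holds for any center $O'$ since the volume of a ball depends only on its radius. With $R' = \tfrac{H-h}{H}R$ this turns the integrand into $\Cpi_n R^{n} H^{-n} (H-h)^{n}$, and the constants $\Cpi_n$, $R^n$, $H^{-n}$ pull out of the integral.

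Next I would evaluate $\int_0^H (H-h)^n\, dh$ by the change of variable $u = H - h$ (so $du = -dh$, and the limits $h=0,H$ become $u=H,0$), giving $\int_0^H u^n\, du = \tfrac{H^{n+1}}{n+1}$. Combining,
\[
\vol(C) = \Cpi_n R^{n} H^{-n} \cdot \frac{H^{n+1}}{n+1} = \frac{\Cpi_n R^{n}}{n+1}\,H,
\]
which is exactly the claimed identity.

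There is no real obstacle here: the statement is a one-line computation once the $n$-ball volume formula is in hand, and it is the expected generalization of the classical ``base times height over dimension'' rule for cones. The only points deserving a word of care are that the ball volume is translation-invariant (so the moving center $O + h\unit_{n+1}$ is irrelevant) and that the linear interpolation of the radius from $R$ at $h=0$ to $0$ at $h=H$ is what produces the factor $(H-h)^n/H^n$; both are immediate from the definitions of the hyper-cone and of $\Cpi_n$.
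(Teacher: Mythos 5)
Your proof is correct and follows the paper's own argument exactly: substitute the $n$-ball volume formula $\Cpi_n\bigl(\tfrac{H-h}{H}R\bigr)^{n}$ into the defining integral, change variables via $u = H-h$, and evaluate $\int_0^H u^n\,du = \tfrac{H^{n+1}}{n+1}$. Nothing to add.
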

\begin{proof}
From the definition of volume of a sphere we have $\vol( \Ball(O, R) ) \doteq
\Cpi_n R^{n}$. Substituting $R$ by $\frac{H - h} R$ and from the definition
of the volume of a Hyper-cone we have

\[
\vol(C) = \int_0^H \Cpi_n \left( \frac{H - h}{H}R \right) ^{n} dh
\]

We substitute $h$ by $u \doteq H - h$, $du = -dh$.

\begin{align*}
\vol(C)  & = \int_H^0 - \Cpi_n \left( \frac{u}{H}R \right) ^{n} du \\
& = \int_0^H \Cpi_n \left( \frac{u}{H}R \right) ^{n} du \\
& = \frac{\Cpi_n R^{n}}{H^{n}} \int_0^H u^{n} du \\
& = \frac{\Cpi_n R^{n}}{H^{n}} \times \frac{H^{n+1}}{n+1} \\
& = \frac{\Cpi_n R^{n}}{n+1}H
\end{align*}
\end{proof}

\subsection{Center of Mass}

With the previous definition formally stated, we can now go one step further and
define the \emph{center of mass} or sometimes called \emph{center of gravity}

\begin{definition}[Center of gravity]
For a given convex body $X \subset \realset^n$ we call \emph{center of gravity}
\footnote{For a more complete definition, we should take into account the mass
distribution over $X$. Although, in an effort to keep
things simple, we assume a uniformly distributed mass.} 
and
write $\cg(X) \in \realset^n$ the point :
\[
\cg(X) = \frac{1}{\vol(X)}\int_{x \in X} x dx
\]
\end{definition}

\begin{proposition} \label{prop:cgS}
Let $S$ a $n$-dimensional sphere such that $S \doteq \Sphere(O, R)$. Then
$\cg(S) = O$.
\end{proposition}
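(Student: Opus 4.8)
The plan is to exploit the central symmetry of the sphere about $O$. Writing the defining integral and translating so that the center sits at the origin, I would reduce the claim to showing that the "first moment" of $\Sphere(0,R)$ vanishes, which is immediate because the sphere is invariant under the antipodal map $y \mapsto -y$.

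Concretely, first I would recall that by definition $\cg(S) = \frac{1}{\vol(S)}\int_{x\in S} x\, dx$, where the integral is taken with respect to the $(n-1)$-dimensional surface measure on $S$. Then I would compute $\cg(S) - O = \frac{1}{\vol(S)}\int_{x\in S}(x-O)\,dx$ and perform the change of variables $y = x - O$; the domain of integration becomes exactly $\Sphere(0,R) = \{y : \norm{y} = R\}$, and the surface measure is translation-invariant, so nothing else changes. Next I would invoke the fact that $y \in \Sphere(0,R)$ if and only if $-y \in \Sphere(0,R)$, together with the invariance of the surface measure under $y \mapsto -y$ (it is an isometry of $\realset^n$ fixing the sphere setwise). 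This gives $\int_{y\in\Sphere(0,R)} y\, dy = \int_{y\in\Sphere(0,R)} (-y)\, dy = -\int_{y\in\Sphere(0,R)} y\, dy$, hence the integral is the zero vector. Therefore $\cg(S) - O = 0$, i.e. $\cg(S) = O$.

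There is essentially no hard step here; the only thing to be careful about is to state clearly which measure the center-of-gravity integral uses (the surface measure of the $(n-1)$-sphere, consistent with the earlier definition of $\vol(\Sphere(O,R))$) and to note that this measure is invariant under translations and under the antipodal reflection, so that the odd-integrand argument is legitimate. One may alternatively argue coordinate-wise: fixing any coordinate axis through $O$, the reflection across the orthogonal hyperplane through $O$ preserves $S$ and its surface measure while flipping that coordinate of $x-O$, forcing the corresponding component of the moment to vanish; doing this for all $n$ axes yields the result. Either phrasing is a one-line symmetry argument.
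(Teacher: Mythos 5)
Your proof is correct and follows essentially the same route as the paper: reduce to the case where the center is at the origin and use invariance of the sphere (and its surface measure) under the antipodal map $y \mapsto -y$ to conclude that the first-moment integral equals its own negative, hence vanishes. Your explicit mention of the translation step and the measure invariance only makes the paper's brief symmetry argument slightly more careful, not different in substance.
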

\begin{proof}
Without loss of generality, assume that $O = 0$. Then, $S = \lbrace x \in
\realset^n : \norm{x} = R \rbrace$. Since $\norm{x} = \norm{-x}$ it is clear
that $\forall x \in \realset^n : x \in S \Leftrightarrow -x \in S$.
Thus, we can rewrite $\cg(S)$ as 
\[
\cg(S) = \frac{1}{\vol(S)} \int_{x \in S} -x dx
\]

Thus
\begin{align*}
2 \cg(S) & = \frac{1}{\vol(S)} \left( \int_{x \in S} x dx + \int_{x \in S} -x dx    
\right) \\
& = \frac{1}{\vol(S)} \int_{x \in S} x - x dx \\
& = 0
\end{align*}
\end{proof}

\begin{proposition}\label{prop:cgB}
Let $B$ a $n$-dimensional ball such that $B \doteq \Ball(O, R)$. Then $\cg(B) =
O$.
\end{proposition}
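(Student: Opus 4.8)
The plan is to replay the central-symmetry argument used for the sphere in Proposition~\ref{prop:cgS}, now applied to the solid ball. First I would reduce to the centered case $O=0$: translating a body translates its center of gravity by the same vector, so $\cg(\Ball(O,R)) = O + \cg(\Ball(0,R))$ and it suffices to prove $\cg(\Ball(0,R)) = 0$.

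Next I would use that $B \doteq \Ball(0,R)$ is invariant under the involution $x\mapsto -x$, because $\norm{x} = \norm{-x}$; hence $x\in B \Leftrightarrow -x\in B$. The change of variables $x\mapsto -x$ (whose Jacobian has absolute value $1$, and which maps $B$ onto $B$) gives $\int_{x\in B} x\,dx = \int_{x\in B} (-x)\,dx$, so that $2\int_{x\in B} x\,dx = \int_{x\in B}(x-x)\,dx = 0$. Since $\vol(B) = \Cpi_n R^n > 0$ the division in the definition of $\cg$ is legitimate, and we conclude $\cg(B) = \frac{1}{\vol(B)}\int_{x\in B} x\,dx = 0$.

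An alternative route, which dovetails with the shell decomposition $\Ball(0,R) = \bigcup_{r\in[0,R]}\Sphere(0,r)$ recorded in Definition~\ref{def:Ball}, is to integrate in spherical shells and invoke Proposition~\ref{prop:cgS} pointwise in $r$: $\int_{x\in B} x\,dx = \int_0^R\!\big(\int_{x\in \Sphere(0,r)} x\,dx\big)dr = \int_0^R 0\,dr = 0$. Either way the argument is short; there is essentially no obstacle, the only point meriting a word of care being the nondegeneracy $\vol(B)>0$ so that $\cg(B)$ is well defined, and (for the shell version) that the inner integral vanishes for every $r$ by Proposition~\ref{prop:cgS}.
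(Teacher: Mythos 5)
Your proof is correct. Your primary argument (the involution $x\mapsto -x$, which maps $\Ball(0,R)$ onto itself with unit Jacobian, so $\int_{x\in B}x\,dx=0$) is a genuinely different and slightly more direct route than the paper's: the paper instead decomposes the ball into concentric spherical shells and writes $\cg(B)=\frac{1}{\vol(B)}\int_0^R \cg(\Sphere(O,r))\,\vol(\Sphere(O,r))\,dr=O$, invoking Proposition~\ref{prop:cgS} shell by shell --- which is exactly the alternative you sketch at the end. The symmetry argument buys self-containedness: it does not rely on the shell decomposition of Definition~\ref{def:Ball} nor on the implicit Fubini-type step needed to legitimize averaging the shell centroids, and it only reuses the idea (not the statement) of Proposition~\ref{prop:cgS}. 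The paper's shell route, on the other hand, recycles Proposition~\ref{prop:cgS} directly and rehearses the ``integrate over slices'' technique that the appendix then leans on repeatedly for the hyper-cone computations (Propositions~\ref{prop:VolCone} and~\ref{prop:cgHc}), which is presumably why it is phrased that way. Your added care about translating to $O=0$ and about $\vol(B)>0$ so that $\cg(B)$ is well defined is fine and harmless; either of your two arguments would serve as a valid replacement for the paper's proof.
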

\begin{proof}
Remind that $B$ can be seen as a collection of concentric $n$-sphere of center
$O$ and radii between $0$ and $R$ (see Def. \ref{def:Ball}). Then, we can
rewrite $\cg(B)$ as
\begin{align*}
\cg(B) & = \frac{1}{\vol(B)} \int_0^R \cg( \Sphere(O, r) ) \vol( \Sphere(O, r) )
dr
\\
& = O
\end{align*}
Where the last line come from Proposition \ref{prop:cgS}.
\end{proof}

\begin{proposition}[Center of Gravity of an Hyper-cone] \label{prop:cgHc}
Let $C$ a $n+1$ dimensional Hyper-cone ($C \subset \realset^{n+1}$) of base
$\Ball(O, R) \subset \realset^n$ and apex $Z$ such that $\norm{Z - O} = H$.
Then, $\cg(C)$ is located on the segment $[O; Z]$ at a distance $H/_{n+2}$ of
$O$.
\end{proposition}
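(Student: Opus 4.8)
The plan is to slice the hyper-cone $C$ by the hyperplanes orthogonal to its axis and thereby reduce the computation of $\cg(C)$ to a single one-dimensional integral, exploiting that every such slice is an $n$-ball whose center of gravity is already known from Proposition~\ref{prop:cgB}.

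First I would fix coordinates so that $O = 0$; by the very definition of a hyper-cone the apex is then $Z = H\unit_{n+1}$, and for each $h \in [0,H]$ the slice of $C$ at ``height'' $h$ is the $n$-ball $\Ball\!\left(h\unit_{n+1}, \tfrac{H-h}{H}R\right)$, whose $n$-dimensional volume is $\Cpi_n\!\left(\tfrac{H-h}{H}R\right)^{\!n}$. Writing $\cg(C) = \tfrac{1}{\vol(C)}\int_C x\,dx$ and integrating slice by slice (Fubini), the integral over the slice at height $h$ equals, by Proposition~\ref{prop:cgB}, that slice's volume times its center $h\unit_{n+1}$. This already yields two facts: $\cg(C)$ lies on the line through $O$ and $Z$ (the one directed by $\unit_{n+1}$), and the remaining scalar coefficient
\[
\lambda^\star \;\doteq\; \frac{1}{\vol(C)}\int_0^H h\,\Cpi_n\!\left(\tfrac{H-h}{H}R\right)^{\!n}\!dh,
\]
being a weighted average of $h \in [0,H]$, lies in $[0,H]$; hence $\cg(C) = \lambda^\star\unit_{n+1} \in [O;Z]$.

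Then I would evaluate $\lambda^\star$. With the change of variable $u = (H-h)/H$ the numerator reduces to $\Cpi_n R^n \int_0^H h\left(\tfrac{H-h}{H}\right)^{\!n}\!dh = \Cpi_n R^n H^2\int_0^1 (1-u)u^n\,du = \Cpi_n R^n H^2\!\left(\tfrac{1}{n+1}-\tfrac{1}{n+2}\right) = \tfrac{\Cpi_n R^n H^2}{(n+1)(n+2)}$; dividing by $\vol(C) = \tfrac{\Cpi_n R^n}{n+1}H$ from Proposition~\ref{prop:VolCone} gives $\lambda^\star = H/(n+2)$. Since $\norm{\unit_{n+1}} = 1$, this says exactly that $\cg(C)$ sits on $[O;Z]$ at distance $H/(n+2)$ from $O$.

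I do not expect a genuine obstacle here. The only computational ingredient is the elementary Beta-type integral $\int_0^1(1-u)u^n\,du = \tfrac{1}{(n+1)(n+2)}$, and the only point requiring a little care is the slice decomposition itself — justifying Fubini and reading off the center $h\unit_{n+1}$ and radius $\tfrac{H-h}{H}R$ of each slice correctly from the hyper-cone's definition — after which the statement is pure bookkeeping on top of Propositions~\ref{prop:VolCone} and~\ref{prop:cgB}.
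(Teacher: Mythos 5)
Your proposal is correct and follows essentially the same route as the paper's proof: decompose the cone into its $n$-ball slices, use Proposition~\ref{prop:cgB} to place each slice's contribution at its center $h\unit_{n+1}$ (hence $\cg(C)$ on the axis), and then evaluate the resulting one-dimensional integral with Proposition~\ref{prop:VolCone}, differing only in the cosmetic choice of substitution ($u=(H-h)/H$ versus $u=H-h$). The computation and conclusion $\cg(C)=\tfrac{H}{n+2}\unit_{n+1}$ match the paper exactly.
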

\begin{proof}
Without loss of generality, we assume that $O = 0$ and $Z = H \unit_{n+1}$. By
definition, $C$ is a collection of ball, and we can
rewrite $\cg(C)$ as :
\[
\cg(C) = \frac{1}{\vol(C)} \int_0^H \cg \left[ \Ball \left( h \unit_{n+1},
\frac{H - h}{H} R \right) \right] \times \vol \left[ \Ball \left( h \unit_{n+1},
\frac{H - h}{H} R \right) \right] dh
\]

From Proposition \ref{prop:cgB} it is clear that $\cg(C)$ lies on the
segment $[O; Z]$. The remaining of the proof came by explicitly calculating
$\cg(C)$.

\begin{align*}
\cg(C) & = \frac{1}{\vol(C)} \int_0^H h \unit_{n+1} \times \vol \left[ \Ball
\left( h \unit_{n+1}, \frac{H - h}{H} R \right) \right] dh \tag{Prop. \ref{prop:cgB}} \\
& = \frac{1}{\vol(C)} \int_0^H h \unit_{n+1} \frac{\Cpi_n R^{n}}{H^{n}}
(H - h)^{n} dh \tag{Volume of a Ball} \\
& = \frac{1}{\vol(C)} \int_0^H (H - u) \unit_{n+1} \frac{\Cpi_n R^{n}}{H^{n}}
u^{n} du \tag{Subst. $u \doteq H - h$, see Prop. \ref{prop:VolCone}} \\
& = \frac{1}{\vol(C)} \left[ \int_0^H \frac{\Cpi_n R^{n} H
\unit_{n+1}}{H^{n}} u^{n} du - \int_0^H \frac{\Cpi_n R^{n} \unit_{n+1}}{H^{n}}
u^{n+1} du \right] \\
& = \frac{1}{\vol(C)} \left[ \frac{\Cpi_n R^{n} \unit_{n+1}}{H^{n-1}} \int_0^H
u^{n} du - \frac{\Cpi_n R^{n} \unit_{n+1}}{H^{n}} \int_0^H u^{n+1} du
\right] \\
& = \frac{1}{\vol(C)} \left[ \frac{\Cpi_n R^{n} H^{n+1} \unit_{n+1} }{H^{n-1}
(n+1)} - \frac{\Cpi_n R^{n} H^{n+2} \unit_{n+1}}{H^{n} (n+2)} \right] \\
& = \frac{1}{\vol(C)} \left[ \frac{\Cpi_n R^{n} H^2 \unit_{n+1}}{n+1} -
\frac{\Cpi_n R^{n} H^2 \unit_{n+1}}{n+2} \right] \\
& = \frac{n+1}{\Cpi_n R^{n} H} \left[ \frac{\Cpi_n R^{n} H^2
\unit_{n+1}}{n+1} -
\frac{\Cpi_n R^{n} H^2 \unit_{n+1}}{n+2} \right] \tag{Volume of a Hyper-cone}
\\
& = \left( H - H\frac{n+1}{n+2} \right) \unit_{n+1} \\
& = \left( 1 - \frac{n+1}{n+2} \right) H \unit_{n+1} \\
& = \left( \frac{n+2-n-1}{n+2} H \unit_{n+1} \right) \\
& = \left( \frac{H}{n+2} \right) \unit_{n+1}
\end{align*}

That is to say, $\cg(C)$ is on the segment $[O; Z]$ at a distance $H/_{n+2}$ of
$O$.

\end{proof}

\subsection{Hyperplane and Halfspace}

\begin{definition}
We call \emph{($n$)-Hyperplane} of normal $w \in \realset^n$ and offset $b \in
\realset$ and write $\Hplane (w, b) \subset \realset^n$ the subset : 
\[
\Hplane( w, b) \doteq \lbrace x \in \realset^n : \dotProd{w}{x} + b = 0 \rbrace
\]
\end{definition}

\begin{definition}
We call \emph{Positive Halfspace} of the $n$-Hyperplane $\Hplane (w, b) \subset
\realset^n$ and write $\Hp (w, b) \subset \realset^n$ the subset
\[
\Hp (w, b) \doteq \lbrace x \in \realset^n : \dotProd{w}{x} + b \geq 0 \rbrace
\]

Conversely, we call \emph{Negative Halfspace} of $\Hplane (w, b) \subset
\realset^n$ and write $\Hm (w, b) \subset \realset^n$ the subset
\[
\Hm (w, b) \doteq \lbrace x \in \realset^n : \dotProd{w}{x} + b \leq 0 \rbrace
\]
\end{definition}

Additionally, note that $\Hplane (w, b) \subset \Hp (w, b)$ but $\Hplane (w, b)
\not\subset \Hm (w, b)$.

\begin{definition}
For any subset $X \subset \realset^n$ and any Hyperplane $\Hplane \subset
\realset^n$ we call \emph{Positive Partition} and write $X^+ \subset \realset^n$
the subset 
\[
X^+ \doteq X \cap \Hp
\]

Conversely, for we call \emph{Negative Partition} and write $X^- \subset
\realset^n$ the subset
\[
X^- \doteq X \cap \Hm
\]
\end{definition}

\begin{proposition}[Volume reduction of Hyper-Cone] \label{prop:HconeRatio}
For any $(n+1)$-Hyper-cone of base $\Ball(O, R)$, apex $Z$ and Height $H$,
let set $\Hplane_{\cg(C)} \doteq \Hplane (\unit_{n+1}, H/_{n+2})$ the
Hyperplane passing by $\cg(C)$ ( i.e. $\cg(C) \in \Hplane_{\cg(C)}$ )
and parallel to $\realset^n$. Then,
\[
\vol(C^+) = \vol(C) \left[ \frac{1}{ \left( 1 + \frac{1}{n+1} \right)^{n+1}}
\right] \geq \vol(C) e^{-1}
\]
\end{proposition}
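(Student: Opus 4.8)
The plan is to obtain $\vol(C^+)$ in closed form using only the volume formula for a hyper-cone (Proposition~\ref{prop:VolCone}) together with the location of its center of gravity (Proposition~\ref{prop:cgHc}), and then to compare the resulting constant with $e^{-1}$ by means of the elementary estimate $(1+1/m)^m\le e$.

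First I would fix coordinates exactly as in the proof of Proposition~\ref{prop:cgHc}: without loss of generality take $O=0$ and apex $Z=H\unit_{n+1}$, so that
\[
C=\bigcup_{h\in[0,H]}\Ball\left(h\unit_{n+1},\tfrac{H-h}{H}R\right),
\]
and, by Proposition~\ref{prop:cgHc}, $\cg(C)$ lies on the axis at height $h^\star\doteq H/(n+2)$. The hyperplane $\Hplane_{\cg(C)}$ is orthogonal to the axis and passes through $\cg(C)$, hence it cuts $C$ along its cross-section at height $h^\star$, producing the apex-side piece $C^+=\bigcup_{h\in[h^\star,H]}\Ball(h\unit_{n+1},\tfrac{H-h}{H}R)$ and, on the other side, a frustum $C^-$.

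The crux is the observation that $C^+$ is again a hyper-cone: it keeps the apex $Z$, its base is the $n$-ball $\Ball\left(h^\star\unit_{n+1},\tfrac{H-h^\star}{H}R\right)$ of radius $R'\doteq\tfrac{n+1}{n+2}R$, and its height is $H'\doteq H-h^\star=\tfrac{n+1}{n+2}H$; equivalently, $C^+$ is a copy of $C$ scaled by $\tfrac{n+1}{n+2}$ in each of the $n+1$ directions. Applying Proposition~\ref{prop:VolCone} to $C$ and to $C^+$ then gives
\[
\frac{\vol(C^+)}{\vol(C)}=\frac{(R')^{n}H'}{R^{n}H}=\left(\frac{n+1}{n+2}\right)^{n+1}=\frac{1}{\left(1+\frac{1}{n+1}\right)^{n+1}},
\]
which is precisely the stated equality. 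Since the sequence $m\mapsto(1+1/m)^m$ is increasing with limit $e$, we have $\bigl(1+\tfrac{1}{n+1}\bigr)^{n+1}\le e$, and therefore $\vol(C^+)\ge e^{-1}\vol(C)$.

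I do not expect a genuine obstacle here; the argument is essentially the scaling identity above. The only point needing a moment's care is reconciling the sign/orientation convention of the statement so that $C^+$ denotes the apex-side sub-cone, which is what makes the constant exact (had we taken the frustum $C^-$ we would get the weaker fraction $1-e^{-1}$, which still exceeds $e^{-1}$ but is not the claimed value). As an alternative to the similarity argument, step three can instead be carried out by the direct integration $\vol(C^+)=\int_{h^\star}^{H}\Cpi_n\bigl(\tfrac{H-h}{H}R\bigr)^{n}dh$ followed by the substitution $u=H-h$, which reproduces the ratio $(H'/H)^{n+1}$ without invoking similarity.
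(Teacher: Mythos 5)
Your proof is correct, and the only place it departs from the paper is in how the exact ratio is obtained. The paper computes $\vol(C^+)$ by direct integration of the cross-sectional ball volumes from height $H/(n+2)$ to $H$ (substitution $u=H-h$), exactly the alternative you sketch at the end; you instead observe that the apex-side piece $C^+$ is itself a hyper-cone, namely a copy of $C$ scaled by $\tfrac{n+1}{n+2}$, so that Proposition~\ref{prop:VolCone} applied twice gives $\vol(C^+)/\vol(C)=\bigl(\tfrac{n+1}{n+2}\bigr)^{n+1}$ at once. The similarity argument is cleaner and avoids any computation, whereas the paper's integral has the incidental benefit of being the template reused verbatim in the generalized result (Theorem~\ref{th:main}), where the cut is no longer through $\cg(C)$ and the truncated piece is not a scaled copy, so integration is needed there anyway. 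The comparison with $e^{-1}$ is the same in both proofs (monotone convergence of $(1+1/m)^m$ to $e$), and you correctly identify $C^+$ as the apex-side partition, matching the paper's integration limits. One cosmetic nitpick: in your aside about taking the frustum instead, its fraction is $1-\bigl(\tfrac{n+1}{n+2}\bigr)^{n+1}$, which is bounded above by $1-e^{-1}$ rather than equal to it; this does not affect your argument.
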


\begin{proof}
We start by proving the right-hand side of the relation. Let set $n' = n+1$ and
divide both side by $\vol(C)$ then we can rewrite it as
\[
\frac{1}{ \left( 1 + \frac{1}{n'} \right)^{n'}} \geq e^{-1}
\]

From the usual definition of $e$ we have that
\begin{align*}
& \lim_{n \rightarrow \infty} \left( 1 + \frac{1}{n'} \right)^{n'} = e \\
\Leftrightarrow & \lim_{n' \rightarrow \infty} \frac{1}{ \left( 1 + \frac{1}{n'}
\right)^{n'}} = e^{-1}
\end{align*}

And by standard arguments we can show that
\[
\frac{1}{ \left( 1 + \frac{1}{n'}
\right)^{n'}} \geq \frac{1}{ \left( 1 + \frac{1}{n'+1}
\right)^{n'+1}}
\]

Therefore
\[
\frac{1}{ \left( 1 + \frac{1}{n'}
\right)^{n'}} \geq \lim_{n \rightarrow \infty} \frac{1}{ \left( 1 + \frac{1}{n'}
\right)^{n'}} = e^{-1}
\]

Finally, the left-hand side of relation is obtained by direct calculation of
$\vol(C+)$. The general idea is the same than the calculation of $\vol(C)$ but,
instead of integrating over the entire height we start at $H/_{n+2}$, thus
ignoring $C^-$. Besides, without loss of generality, we assume that $O = 0$ and
that $Z = H\unit_{n+1}$.

\begin{align*}
\vol(C^+) & = \int_{H/_{n+2}}^{H} \vol \left[
\Ball \left( h \unit_{n+1} , \frac{H - h}{H} R \right) \right] dh \\
& = \int_{H/_{n+2}}^{H} \Cpi_n \frac{R^{n}}{H^{n}} (H - h)^{n} dh \tag{Volume of
a Ball}\\
& = \int_{0}^{H \left( 1 - \frac{1}{n+2} \right)} \Cpi_n
\frac{R^{n}}{H^{n}} u^{n} du \tag{Subst. $u \doteq H - h$}\\
& = \frac{\Cpi_n R^{n}}{H^{n}} \int_0^{H \left( 1 - \frac{1}{n+2} \right)}
u^{n} du \\
& = \frac{\Cpi_n R^{n}}{H^{n}} \times \frac{H^{n+1}}{n+1} \times \left(
1 - \frac{1}{n+2} \right)^{n+1} \\
& = \frac{\Cpi_n R^{n} H}{n+1} \times \left( 1 - \frac{1}{n+2} \right)^{n+1}
\\
& = \vol(C)\left( 1 - \frac{1}{n+2} \right)^{n+1} \\
& = \vol(C)\left(\frac{n + 1}{n+2} \right)^{n+1} \\
& = \vol(C)\left(\frac{(n + 1) \times \frac{1}{n+1}}{(n+2) \times \frac{1}{n+1}}
\right)^{n+1} \\
& = \vol(C)\left(\frac{1}{\frac{n+2}{n+1}} \right)^{n+1} \\
& = \vol(C)\left(\frac{1}{1 + \frac{1}{n+1}} \right)^{n+1} \\
& = \vol(C) \left[ \frac{1}{ \left( 1 + \frac{1}{n+1} \right)^{n+1} } \right]
\end{align*}

\end{proof}

\subsection{Setting}

For the remaining of this document, let $\K$ be a (full dimensional)
convex body in $\realset^{\d}$.

\begin{definition}
For any convex body $K \in \realset^{\d}$ we say that $K$ is \emph{Spherically
Symmetric along the unit vector $\unit$} if and only if $\forall \lambda
\in \realset$ the cut of $K$ by the hyperplane $\Hplane(\unit, \lambda)$ (i.e.
$K \cap \Hplane(\unit, \lambda)$) is a $n$ dimensional hypersphere of center
$\lambda \unit$
\end{definition}

\section{Partition of Convex Bodies By Hyper-Plane} \label{sec:grunbaum}

This section consists in a rewriting of the proof of \cite{grunbaum60}
instantiated within the previously defined notation and setting.

\subsection{Theorem}

\begin{theorem} \label{th:magic2}
For any convex body $\K \subset
\realset^{\d}$, and any hyperplane $\Hplane$.
If $\G(\K) \in \K^+$ then 
\[ \vol(\K^+) \geq e^{-1} \times \vol(\K) \]
\end{theorem}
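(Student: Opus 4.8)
The plan is to follow Grünbaum's classical symmetrization argument. First I would set up coordinates so that the separating hyperplane $\Hplane$ is $\Hplane(\unit, 0)$ for a unit vector $\unit$, and translate so that $\cg(\K)$ lies at the origin; by hypothesis $\cg(\K)\in\K^+$, so the origin is in the positive halfspace, i.e.\ the barycenter has nonnegative $\unit$-coordinate relative to $\Hplane$ --- after the translation this means $\Hplane$ sits at a coordinate $t_0 \le 0$ along $\unit$. For each level $t$, let $v(t)\doteq\vol_n\bigl(\K\cap\Hplane(\unit,t)\bigr)$ be the $n$-dimensional cross-sectional volume. The key structural fact, from Brunn--Minkowski, is that $t\mapsto v(t)^{1/n}$ is concave on the interval where it is positive.

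The core of the argument is a symmetrization step: I would replace $\K$ by the spherically symmetric body $\K'$ (in the sense of the definition in Section~\ref{sec:prelim}) having the same cross-sectional volume profile $v(t)$ along $\unit$. This preserves total volume, preserves the volume of each partition $\K^+,\K^-$, and can only move the center of gravity in a controlled way. Then, using concavity of $v^{1/n}$, the worst case --- the one minimizing $\vol(\K^+)/\vol(\K)$ subject to $\cg\in\K^+$ --- is attained when $\K'$ is an honest cone (the cross-section $v(t)^{1/n}$ is an affine function of $t$, vanishing at one end). For a cone, Proposition~\ref{prop:cgHc} locates the center of gravity at height $H/(n+2)$ measured from the base (here dimension $n+1=\d$, so at $H/(d+1)$ from the base), and Proposition~\ref{prop:HconeRatio} gives exactly $\vol(C^+)/\vol(C) = \bigl(1+\tfrac1{n+1}\bigr)^{-(n+1)} \ge e^{-1}$ for the partition through $\cg(C)$ by a hyperplane parallel to the base. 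Since the hyperplane $\Hplane$ cutting $\K$ need not be parallel to the base of the comparison cone, I would argue that among all hyperplanes through $\cg$, the parallel one is the extremal (worst) case, so the bound $e^{-1}$ still holds.

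The main obstacle is making the symmetrization/reduction rigorous: one must verify that (i) passing to the spherically symmetric body with the same profile does not decrease $\vol(\K^+)/\vol(\K)$ below the cone's ratio, and (ii) among hyperplanes through the barycenter, tilting away from the ``parallel to the cross-sections'' direction only increases the smaller piece. Both reduce to one-dimensional statements about the concave function $v^{1/n}$ and its first moment, but the bookkeeping --- tracking how the centroid constraint $\cg\in\K^+$ translates into a constraint on where $\Hplane$ can sit relative to the support of $v$ --- is the delicate part. The routine calculations (the cone volume ratio, the $\bigl(1+\tfrac1{n+1}\bigr)^{-(n+1)}\ge e^{-1}$ inequality) are already done in Propositions~\ref{prop:VolCone}, \ref{prop:cgHc}, and \ref{prop:HconeRatio}, so I would cite those directly and spend the effort on the extremality claim.
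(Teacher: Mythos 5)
Your plan follows the same route as the paper's appendix proof: symmetrize $\K$ along the normal $\unit$ of $\Hplane$, reduce to a cone, and invoke Propositions \ref{prop:cgHc} and \ref{prop:HconeRatio} for the cone computation. The gap is that the heart of the matter --- the reduction to the cone --- is only asserted (``the worst case is attained when $\K'$ is an honest cone''), and that assertion is precisely what the paper's proof spends all of its effort establishing. Concretely, the paper does not optimize over all symmetric bodies subject to a centroid constraint; it builds, for the given (symmetrized) $\K$, a comparison cone $\C^+$ with base $\K\cap\Hplane$, apex on the axis, and $\vol(\C^+)=\vol(\K^+)$, extends it to a full cone $\C$ with $\vol(\C^-)=\vol(\K^-)$, and then uses concavity of the radius profile (your Brunn--Minkowski fact, i.e.\ concavity of $v^{1/n}$) to show that the apex lies outside $\K^+$ unless $\K^+=\C^+$, whence $\cg(\K^+)\leq\cg(\C^+)$ and $\cg(\K^-)\leq\cg(\C^-)$ along the axis, so $\cg(\K)\leq\cg(\C)$. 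Only after this ordering do the cone propositions yield $\vol(\K^+)=\vol(\C^+)\geq\vol(\widetilde{\C^+})\geq e^{-1}\vol(\C)=e^{-1}\vol(\K)$. Your proposal names the right one-dimensional objects ($v^{1/n}$ and its first moment) but supplies no such comparison argument, and without it the extremality claim is exactly the unproven content of the theorem.

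A second, more local problem is your step (ii) about tilted hyperplanes: it is unnecessary and conflicts with your own setup. Since you symmetrize along the normal $\unit$ of the given $\Hplane$, the cut of the symmetrized body is by construction perpendicular to the symmetry axis, hence parallel to the base of any comparison cone built along that axis; there is no tilted case to handle. Moreover, the claim ``among hyperplanes through the centroid of a cone, the one parallel to the base is the worst'' is essentially Grunbaum's theorem specialized to cones, so proving it is not easier than the statement you are after --- keeping this step risks circularity. Drop it, fix the direction first (as your opening sentence already does), and note also that symmetrization preserves the $\unit$-component of the centroid exactly (cross-sectional volumes, hence the first moment along $\unit$, are unchanged), which is the precise fact needed to transfer the hypothesis $\cg(\K)\in\K^+$ to the symmetrized body; ``moves the center of gravity in a controlled way'' is weaker than what you need and vaguer than what is true.
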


\begin{proof}

\begin{note}[Points along $\unit_{n+1}$] \label{note:realPoints}
This proof will revolve around key points located on the $n+1^{\text{th}}$ axis
of $\realset^{n+1}$ of base vector $\unit_{n+1}$. In an attempt to avoid
overburdening the notation, we will treat these points as number along the real
line when context is clear. Therefore, if $x = \lambda_1 \unit_{n+1}$ and $y =
\lambda_2 \unit_{n+1}$ we will freely write $x > y$ if $\lambda_1 > \lambda_2$.
\end{note}

Let $\Hplane$ the hyperplane such that $\Hplane = \arg\min_{\Hplane}
\vol(K^+)$ such that $\G(\K) \in \K^+$.
It is easy to see that $\G(\K) \in \Hplane$ : if $\G(\K) \notin \Hplane$ you can
always reduce $\vol(\K^+)$ by shifting $\Hplane$ toward $\G(\K)$.
Without loss of generality, let's say that $\G(\K) = 0$ the origin of
$\realset^{\d}$ and that $\unit_{\d}$ is the normal vector of $\Hplane$ with $b
= 0$, hence $\Hplane = \Hplane(\unit_{\d}, 0)$.

In order to ease the comprehension of the proof, we make the following
assumption that we will lift later on.

\begin{assumption} \label{ass:Sym}
$\K$ is a convex body which is Spherically Symmetric along $\unit_{n+1}$
\end{assumption}

A direct implication of this is that $\G(\K) = \G(\K \cap \Hplane)$. In other
words, $\G(\K)$ is the center of the $n-1$ dimensional sphere $\K \cap \Hplane$
(see, for example, the argument of Prop. \ref{prop:cgHc} ).

Let $\C^+$ the Hyper-cone of base $\K \cap \Hplane$ and apex $Z$ such that $\C^+
\subset \Hp$ and $\vol(\C^+) = \vol(\K^+)$.

Moreover either :
\begin{itemize}
  \item $\K^+ = \C^+$ and $Z$ is the apex of $\K^+$ 
  \item $Z \notin \K^+$
\end{itemize}

To prove that, remember that each \emph{slice} $\K \cap \Hplane(\unit_{\d},
\lambda)$ of $\K$ along the $\d$ axis is a sphere. We look at the function
$r_{\C^+}()$ (resp. $r_{\K^+}()$) which maps each value of $\lambda \in
\realset_+$ with the radius of the corresponding \emph{slice} of $\C^+$ (resp.
$\K^+$).By construction, we know that $r_{\C^+}(0) = r_{\K^+}(0)$ and, by definition
$r_{\C^+}()$ is a deacreasing linear function.
If $r_{\K^+}()$ has any stricly convex part, then there exists an
arc $[r_{\K^+}(\lambda_1), r_{\K^+}(\lambda_2)]$ which is not in $\K^+$ and
therefore $\K$ is not a convex set. Therefore $r_{K^+}()$ is concave. Then,
because $r_{\C^+}(0) = r_{\K^+}(0)$, for $Z$ to be in $\K^+$ either $\K^+$ is a
Hyper-cone (and $r_{\K^+}$ is linear) or $\vol(\K^+) > \vol(\C^+)$ (that is to
say $\int_0^{\infty} r_{\K^+}(\lambda) d\lambda > \int_0^{\infty} r_{\C^+}(\lambda)
d\lambda$ ) which is in contradiction with the definition of $\C^+$

As a consequence, $\C^+$ is at least as elongated as $\K^+$. In other words, the
mass of $\C^+$ is more spread along the axis of $\unit_{n+1}$, this incurs a
shift of the center of gravity of $\G(\C^+)$ with respect to $\G(\K^+)$.
Therefore $\G(\C^+)$ is on the closed segment $\left[
\G(K^+), Z \right]$.

Thus, by using the notation introduced in Note
\ref{note:realPoints} :
\[
0 = \cg(\K) \leq \cg(\K^+) \leq \cg(\C^+) \leq Z
\]

We now define $\C^-$ by extending $\C^+$ such that $\C \doteq \C^- \cup \C^+$ is
a cone of apex $Z$ and $\vol(\C^-) = \vol(\K^-)$. Therefore, 
\begin{align*}
\vol(\C)  & = \vol(\C^+) + \vol(\C^-) \\
& = \vol(\K^+) + \vol(K^-) \\
& = \vol(\K)
\end{align*}

Once again, we are interested in the relative position of $\cg(\K^-)$ and
$\cg(\C^-)$. We invoke the same arguments than before and claim that, in a
similar way :
\[
\cg(\K^-) \leq \cg(C^-) \leq 0 = \cg(\C)
\]

\begin{remark}
The proof for this is a little more tricky this time though. Part of this is due
to the fact that $C^-$ is not
a Hyper-cone in itself and one must consider $\C$ and $\K$ in their entirety
for the nonconvexity argument.
A possible start is to consider the radius increase along the reverse axis $\overline{\unit_{n+1}} \doteq -\unit_{n+1}$ and
replicate the previous argument with added attention to the slope of
$r_{\K^-}()$ which must be such that $r_{\K}()$ as a whole is still concave.
\end{remark}

Let $\alpha, \beta \in \realset$ such that $\alpha \doteq {\vol( \K^+
)}/_{\vol( \K)}$ and $\beta \doteq {\vol( \K^- )}/_{\vol( \K)}$. Then

\[
\G(\K) = \alpha \G(\K^+) + \beta \G(\K^-)
\]

Or alternatively, by construction of $\C$

\[
\G(\C) = \alpha \G(\C^+) + \beta \G(\C^-)
\]

Combining these with the previous inequalities, we have that

\[
\cg(\K) \leq \cg(\C)
\]

Moreover, we know from Proposition \ref{prop:cgHc} that $\cg(\C)$ is at a
distance $H/_{n+2}$ of its base, where $H$ is the height of $\C$.

Let $\widetilde{\Hplane} \doteq \Hplane(\unit_{n+1}, \tilde{b})$ such that
$\cg(\C) \in \widetilde{\Hplane}$ and write $\widetilde{\C^+}$ the positive
partition of $\C$ by $\widetilde{\Hplane}$, that is $\widetilde{\C^+} \doteq
\widetilde{\Hp} \cap \C$.

From Proposition \ref{prop:HconeRatio} we have that $\vol \left(
\widetilde{\C^+} \right) \geq e^{-1} \vol(\C)$. Moreover, because of $\cg(\C)
\geq \cg(\K)$ we have that $\vol(\C^+) \geq \vol \left( \widetilde{\C^+}
\right)$.

Putting all of this together we get that

\begin{align*}
\vol(\K^+) & = \vol(\C^+) \\
& \geq \vol \left( \widetilde{\C^+} \right) \\
& \geq e^{-1} \vol(\C) \\
& = e^{-1} \vol(\K)
\end{align*}

Finally, all we have left is to deal with Assumption \ref{ass:Sym}. This is
simply tackled by remarking that, by definition, spherical symmetrization
preserve volumes along its axis. Thus, for any $\K$ of any convex shape it
suffices to apply the proof on the spherical symmetrization of $\K$ : $\Ssym( \K
)$ and we have
\[
\vol( \K^+ ) = \vol (\Ssym( \K^+ ) ) \geq e^{-1} \vol( \Ssym( \K ) ) = \vol( \K
)
\]

\end{proof}

\section{Generalized Volume Reduction}

This section is dedicated to the main theorem which is a generalization of
Theorem \ref{th:magic2} to approximate center of mass.

\begin{theorem} \label{th:main}
For any convex body $\K \subset \realset^{n+1}$ and any Hyper-plane $\Hplane$ of
normal $\unit$, splitting $\K$ in $\K^+$ and $\K^-$. Let 
\[
\x \doteq \G(\K) + \lambda \frac{(n+1) \vol(\K)}{\Cpi_n R_{\K^+}^n}
\left[ \frac{H_{\K^+}}{(n+2) H_{\K^-}}
\right]^n \left[ 1 - \frac{1}{n+2} \right] \unit
\]

Where $H_{\K^+} = \max_{a \in \K^+} a^T \unit$, $H_{\K_-} = \min_{a \in \K^-}
a^T \unit$ and $R_{\K^+}$ the radius of the $n-1$-Ball $\Ball_{\K \cap \Hplane}$
such that $\vol(\Ball_{\K \cap \Hplane}) = \vol( \K \cap \Hplane )$.

Then, if $\x \in \K^+$ the following holds true
\[
\vol(\K^+) \geq \vol(\K)(1-\lambda)^{n+1}e^{-1}
\]
\end{theorem}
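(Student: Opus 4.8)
The plan is to mirror the structure of the Grünbaum-style proof of Theorem~\ref{th:magic2}, but replace the ``hyperplane through $\cg(\K)$'' with the hyperplane through the shifted point $\x$, and carefully track how far $\x$ can be pushed away from $\cg(\K)$ before the $e^{-1}$ bound degrades by a factor $(1-\lambda)^{n+1}$. First I would reduce, exactly as in Theorem~\ref{th:magic2}, to the spherically symmetric case via $\Ssym(\K)$, since symmetrization preserves volumes along the axis $\unit$; so I may assume $\K\cap\Hplane(\unit,\mu)$ is an $n$-ball for every $\mu$, and that $\cg(\K)$ lies on the axis. As before, I build the comparison cone: let $\C^+$ be the hyper-cone with base $\K\cap\Hplane$ and apex $Z$ chosen so that $\vol(\C^+)=\vol(\K^+)$, extend it downward to a full cone $\C$ with $\vol(\C^-)=\vol(\K^-)$, hence $\vol(\C)=\vol(\K)$, and invoke the concavity-of-radius argument to conclude $\cg(\K^+)\le\cg(\C^+)$ and $\cg(\K^-)\ge\cg(\C^-)$ along the axis (using Note~\ref{note:realPoints}'s convention), so that $\cg(\K)\le\cg(\C)$.

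Next I would translate the displacement of $\x$ from $\cg(\K)$ into a displacement on the cone $\C$. The key is a quantitative version of the bound ``$\cg(\C)\ge\cg(\K)$'': I need that the hyperplane through $\x$ still cuts off at least $(1-\lambda)^{n+1}e^{-1}\vol(\K)$ from $\C$ (and hence, by the same monotonicity comparison between $\K^+$ and $\C^+$, from $\K^+$). The precise choice of $\Lambda$ in the theorem statement — involving $\Theta_d$, $R$, the heights $H_{\K^+},H_{\K^-}$ and $\vol(\K)$ — is exactly calibrated so that moving the cutting hyperplane from the one through $\cg(\C)$ to the one through $\x$ shrinks the cone-height over which we integrate from $H - H/(n+2)$ to $(1-\lambda)$ times that value. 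Plugging this reduced upper limit into the integral $\int \vol(\Ball(h\unit,\tfrac{H-h}{H}R))\,dh$ from Proposition~\ref{prop:HconeRatio} pulls out an extra factor $(1-\lambda)^{n+1}$ in front of the $(1+\tfrac{1}{n+1})^{-(n+1)}\ge e^{-1}$ estimate, giving $\vol(\widetilde{\C^+})\ge(1-\lambda)^{n+1}e^{-1}\vol(\C)$, and then $\vol(\K^+)=\vol(\C^+)\ge\vol(\widetilde{\C^+})$ closes the chain.

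**The main obstacle** I expect is the bookkeeping that identifies the abstract displacement $\lambda$ (the fraction of the cone height we sacrifice) with the concrete geometric displacement $\x - \cg(\K)$ measured in the original body $\K$: one must relate the cone's height $H$ and base radius to the data-dependent quantities $R_{\K^+}$, $H_{\K^+}$, $H_{\K^-}$, which requires using $\vol(\C^+)=\vol(\K^+)$ together with Proposition~\ref{prop:VolCone} to solve for $H$ in terms of $\vol(\K)$ and $R$, and then expressing where $\cg(\C)$ sits via Proposition~\ref{prop:cgHc}. A secondary subtlety is justifying, in the non-symmetric reduction, that the displacement formula is unchanged under $\Ssym$ — i.e. that $H_{\K^\pm}$ and $R_{\K^+}$ are symmetrization-invariant — and that the comparison $\vol(\K^+)\ge\vol(\widetilde{\C^+})$ survives when the cutting hyperplane is the shifted one rather than the centroidal one (this still follows from $\cg(\C)\ge\cg(\K)$ and the concavity of the radius profile, but the inequality must be checked at the shifted location $\x$, not at $\cg(\C)$). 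Since the excerpt explicitly defers the full computation to the supplementary material, I would present the reduction and the cone-comparison skeleton in detail and then carry out the height/radius substitution as the one genuinely new calculation.
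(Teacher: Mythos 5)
Your proposal follows essentially the same route as the paper's proof: reduce to the spherically symmetric case, build Grünbaum's volume-matched comparison cone $\C$ with $\cg(\K)\le\cg(\C)$, cut the cone at the shifted point (the paper's $\widetilde{\x}=\G(\C)+\lambda H\left[1-\frac{1}{n+2}\right]\unit$), integrate to pull out the extra $(1-\lambda)^{n+1}$ factor, and verify via the height/radius bookkeeping (cone volume formula for $H$, similar triangles, and the bounds $H_{\K^+}\le H_{\C^+}$, $H\le(n+2)H_{\K^-}$) that $\x\le\widetilde{\x}$, so $\vol(\K^+)=\vol(\C^+)\ge\vol(\widetilde{\C^+})$. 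The obstacles you flag (calibrating $\Lambda$ against the cone geometry and the symmetrization invariance of $H_{\K^\pm}$, $R_{\K^+}$) are exactly the points the paper's proof handles, so no gap.
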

\begin{proof}

The proof start in a similar way than the one of Grunbaum, with respect to $\x$.

Namely, let Assumption \ref{ass:Sym} hold for now, and let $\Hplane$ the
hyperplane such that $\Hplane = \arg\min_{\Hplane} \vol(K^+)$ such that $\x \in \K^+$.
Same as before, we have that $\x \in \Hplane$.
Without loss of generality, let's say that $\x = 0$ the origin of
$\realset^{\d}$ and that $\unit_{\d}$ is the normal vector of $\Hplane$ with $b
= 0$, hence $\Hplane = \Hplane(\unit_{\d}, 0)$.

Let define $\C^+$ the Hyper-cone of base $\K \cap \Hplane$, apex $Z$ such that
$\C^+ \subset \Hp$ and $\vol(\C^+) = \vol(\K^+)$. Moreover, let $\C^-$ the
extension of $\C^+$ such that $\C \doteq \C^- \cap \C^+$ is an Hyper-cone of
height $H$ and volume $\vol(\C) = \vol(\K)$. That is to say $\vol(\C^-) =
\vol(\K^-)$. From the same argument than before, we know that $\G(\C^+)$ (resp.
$\G(\C^-)$) is shifted with respect to $\G(\K^+)$ (resp. $\G(\K^-)$), thus,
according to the notation defined in Note \ref{note:realPoints} we have that
\[
\G(\K) \leq \G(\C)
\]

If $\x \leq \G(\C)$ then the exact same argument than the one of Section
\ref{sec:grunbaum} applies and
\[
\vol(\K^+) \geq \vol(\K) e^{-1} \tag{See Th. \ref{th:magic2} for details}
\]

Otherwise, we have that

\[
\G(\K) \leq \G(\C) \leq \x
\]

The idea of the proof is to find $\widetilde{\x}$ such that $\x
\leq \widetilde{\x}$ from which we can bound the volume of $\K^+$ in a similar
way than before.

Let define
\begin{equation}
\widetilde{\x} \doteq \G(\C) + \lambda H \left[ 1 - \frac{1}{n + 1} \right]
\unit_{n+1}
\end{equation}

Denote by $\Ball_0 = \Ball(B_0, R)$ the base of $\C$ and remind that $\C$ has
height $H$ and apex $Z$ and remind that $\cg(C) = B_0 + H/_{n+2} \unit{n+1}$.
Therefore
\[
\widetilde{\x} = H \left[ \lambda \left( 1 - \frac{1}{n+2} \right) +
\frac{1}{n+2} \right] \unit_{n+1}
\]

Consider $\widetilde{\Hplane} \doteq \Hplane(\unit_{n+1}, \widetilde{b})$ the
Hyperplane of normal vector $\unit_{n+1}$ (i.e. $\widetilde{\Hplane}$ is
parallel to $\Hplane$) and offset $\widetilde{b}$ such that $\widetilde{\x} \in
\widetilde{\Hplane}$. Then, let $\widetilde{\C^+}$ the positive partition of
$\C$ with respect to $\widetilde{\Hplane}$
\[
\widetilde{\C^+} = \C \cap \widetilde{\Hp}
\]

We can compute the volume of $\widetilde{\C^+}$ as
follow :
\begin{align*}
\vol(\widetilde{\C^+}) & = \int_{\widetilde{\x}}^{H} \vol \left[ \Ball \left(
B_0 + h \unit_{n+1}, \frac{H - h}{H}R \right) \right] dh \\
& = \int_{\widetilde{\x}}^{H}  \Cpi_n \frac{R^{n}}{H^{n}} (H - h)^{n} dh
\tag{Volume of a Ball} \\
& = \int^{H - H \left[ \lambda \left( 1 - \frac{1}{n+2} \right) +
\frac{1}{n+2} \right]}_{0} \Cpi_n \frac{R^{n}}{H^{n}} u^n du \tag{Subst. $u
\doteq H - h$} \\
& = \frac{\Cpi_n R^{n}}{H^{n}} \int^{H (1 - \lambda) \left[ 1 - \frac{1}{n+2}
\right]}_{0} u^n du \\
& = \frac{\Cpi_n R^{n}}{H^{n}} \times \frac{H^{n+1}}{n+1}
 (1 - \lambda)^{n+1} \left[1 - \frac{1}{n+2} \right]^{n+1} \\
& = \vol(\C) (1 - \lambda)^{n+1} \left[1 - \frac{1}{n+2} \right]^{n+1}
\tag{Volume of a Hyper-cone} \\
& \geq \vol(\C) (1 - \lambda)^{n+1} e^{-1} \tag{See Prop. \ref{prop:HconeRatio}}
\end{align*}

Where in the first two lines, we allow a slight abuse of notation and use
$\widetilde{\x}$ as a real as explained in Note \ref{note:realPoints}.

Then, we can rewrite the volume of $\C^+$ as 
\[
\vol(\C^+) = \int_{\x}^{\widetilde{\x}} \vol \left[ \Ball \left(
B_0 + h \unit_{n+1}, \frac{H - h}{H}R \right) \right] dh +
\vol(\widetilde{\C^+})
\]

Consequently, if $\x \leq \widetilde{\x}$ then the first term of $\vol(\C^+)$ is
positive and therefore, $\vol(\C^+) \geq \vol(\widetilde{\C+})$

\begin{align*}
\widetilde{\x} & = \G(\C) + \lambda H \left[ 1 - \frac{1}{n+2} \right] 
\\
& = \G(\C) + \lambda \frac{(n+1) \vol(\C)}{\Cpi_n R^{n}} \left[  1 -
\frac{1}{n+2} \right] \tag{Volume of a Hyper-cone} \\
& \geq \G(\K) + \lambda \frac{(n+1) \vol(\K)}{\Cpi_n R^{n}} \left[  1 -
\frac{1}{n+2} \right] \tag{$\G(\C) \geq \G(\K)$ and $\vol(\C) = \vol(\K)$}
\end{align*}

Unfortunately, we cannot easily compute $R$ directly. Nonetheless, since
$\Ball_0$ and $\Hplane$ are parallel, we can use the Trianngle proportionality
theorem. Denote $R_{\C^+}$ the radius of the base of $\C^+$, that is $\K \cap
\Hplane$ and $H_{\C^+}$ the height of $\C^+$ (i.e. the distance between $\x$
and $Z$) then we have :
\[
\frac{1}{R} = \frac{H_{\C^+}}{H R_{\C^+}}
\]

From this, we want to bound $H_{\C^+}$ and $H$ since $R_{\C^+}$ is easy enough
to estimate because it is directly related to $\K$ and $\Hplane$.

From previous argument, we know that $Z \notin \K^+$ except if $\K^+ = \C^+$. So
let define 
\[
H_{\K^+} \doteq \max_{a \in \K^+} a^T \unit_{n+1} 
\]
Intuitively, $H_{\K^+}$ is maximal distance between a point in $\K^+$ and $\x$
with respect to the axis of $\unit_{n+1}$. Because $Z$ is precisely on this
axis, and that $Z \notin \K^+$ (or $\K^+ = \C^+$) the following holds true 
\begin{equation} \label{eq:Hkp}
H_{\K^+} \leq H_{\C^+}
\end{equation}

Conversly, let define $H_{\K^-}$ as the maximal distance between $\x$ and any
 point of $\K^-$ with respect to the axis of $\unit_{n+1}$. Again, from
previous argument, we know that $B_0 \in \K^-$.
Note that, because $\C$ is a Hyper-cone, we know that $B_0$ is at a distance
$H/_{n+2}$ of $\G(\C)$. Moreover, remind that we are treating the case where
$\x \geq \G(\C)$, hence, the distance between $B_0$ and $\x$ is at least
$H/_{n+2}$ which in turn is smaller than $H_{\K^-}$. Reordering gives the
following

\begin{align}
 H_{\K^-} & \geq \frac{H}{n+2} \notag \\
\Leftrightarrow  (n + 2) H_{\K^-} & \geq H \label{eq:Hkm}
\end{align}

Putting back equations $\eqref{eq:Hkp}$ and $\eqref{eq:Hkm}$ together, we have
that
\[
\frac{1}{R} \geq \frac{H_{\K^+}}{(n+2)H_{\K^-} R_{\C^+}}
\]

Which we plug back into the previous calculation

\begin{align*}
\widetilde{\x} & \geq \G(\K) + \lambda \frac{(n+1) \vol(\K)}{\Cpi_n R^{n}} \left[  1 -
\frac{1}{n+2} \right] \\
& \geq \G(\K) + \lambda \frac{(n+1) \vol(\K)}{\Cpi_n R_{\C^+}^n}
\left[ \frac{H_{\K^+}}{(n+2) H_{\K^-}}
\right]^n \left[ 1 - \frac{1}{n+2} \right] \\
& = \x
\end{align*}

Remind that we drop $\unit_{n+1}$ in the above since we treat $\widetilde{\x}$
and $\x$ as real numbers (see Note \ref{note:realPoints}).

To conlude this proof by rewinding all together. Namely, 
\begin{itemize}
  \item $\x \leq \G(\C)$ and 
  \[
  \vol(\K^+) \geq \vol(\K) e^{-1}
  \]
  \item $\x \geq \G(\C)$ and we can define $\widetilde(\x)$ such that
  \[
  \vol(\K^+) \geq \vol(\widetilde{\C^+}) \geq \vol(\K) (1 - \lambda)^n e^{-1}
  \]
\end{itemize}

Once again, we lift Assumption \ref{ass:Sym} as before by noting that spherical
symetry preserves volumes. One difference though lies in the fact that computing
$R_{\C^+}$ is no longer immediate in the general case. Notwithstanding, it can
be easily approximated within satisfactory precision.

As a final note, we may mention that distinguishing between these two cases is
non-trivial. Hence, without additionnal computation, only the worst bound can be
guaranteed.
\end{proof}

\end{document}